\documentclass[pmlr]{jmlr}

\usepackage{amsmath,amssymb,graphicx,url}
\usepackage{booktabs}
\usepackage{microtype}
\usepackage{hyperref}
\hypersetup{
  hypertexnames=false,
  pdftitle={Conformal Calibration for Multi-Modal Regression with Missing Modalities},
  pdfauthor={Ilia Azizi},
  pdfsubject={Modality-aware conformal calibration for regression},
  pdfkeywords={conformal prediction, multi-modal regression, missing modalities}
}
\expandafter\def\expandafter\UrlBreaks\expandafter{\UrlBreaks\do\-}
\usepackage{placeins}
\usepackage{tikz}
\usetikzlibrary{arrows.meta,positioning}
\usepackage{algorithm}
\usepackage{algcompatible}
\usepackage{dsfont}

\AtBeginDocument{}
\newcommand{\appref}[1]{\hyperref[#1]{Appendix~\ref*{#1}}}

\jmlrvolume{329}
\jmlryear{2026}
\jmlrworkshop{Conformal and Probabilistic Prediction with Applications}
\jmlrproceedings{PMLR}{Proceedings of Machine Learning Research}

\title[Conformal Calibration for Multi-Modal Regression with Missing Modalities]{Conformal Calibration for Multi-Modal Regression\\[0.2cm]with Missing Modalities}

\author{%
  \Name{Ilia Azizi}\Email{ilia.azizi@unil.ch}\\
  \addr{Department of Operations, HEC Lausanne, University of Lausanne, Switzerland}\\
  \addr{BegooAI, Switzerland}
  }

\editor{Ernst Ahlberg, Ulf Johansson, Henrik Bostr\"om, Alberto Carlevaro, Johan Hallberg Szabadv\'ary and Lars Carlsson}

\begin{document}

\maketitle

\begin{abstract}
Prediction intervals for multi-modal regression with tabular variables, text, images, or other input sources are difficult to calibrate when those sources disagree or one is missing. A single global quantile averages these regimes together instead of calibrating to the modality pattern observed at test time. We address this through a modality-aware conformal calibration layer. The layer trains or reuses one predictor per modality, computes a disagreement score from their predictions, and uses that score in split conformal calibration under a strict split protocol. We use the score in two complementary ways. First, a continuous disagreement-scaled method reallocates interval width across examples while preserving the usual marginal split-conformal guarantee. Second, a Mondrian (stratified) method calibrates within groups defined by disagreement or modality availability fixed before calibration, giving group guarantees under joint exchangeability of the calibration and test examples. Across four multi-modal datasets, the disagreement-scaled layer matches or improves the marginal conformal baseline in 59 of 60 paired runs for interval continuous ranked probability score (CRPS) and in 52 of 60 for interval width, while keeping empirical coverage near the 95\% target. In stress tests with missing modalities, mask-matched recalibration recovers up to 19.5 percentage points of coverage in the hardest fixed-mask regime. The result is a simple, model-agnostic reliability layer for multi-modal regression systems. A project page is available at \url{https://unco3892.github.io/modality-aware-conformal}.
\end{abstract}

\begin{keywords}
Conformal prediction,
multi-modal regression,
missing modalities.
\end{keywords}

\section{Introduction}
\label{sec:intro}

\begin{figure}[!htbp]
\centering
\definecolor{tabcol}{RGB}{38,108,196}
\definecolor{txtcol}{RGB}{33,150,80}
\definecolor{imgcol}{RGB}{226,138,22}
\definecolor{basecol}{RGB}{20,138,138}
\definecolor{calcol}{RGB}{124,55,160}
\providecommand{\icTab}[2]{\begin{scope}[shift={(#1,#2)}]
  \draw[tabcol,line width=0.5pt,fill=tabcol!10] (-0.20,-0.16) rectangle (0.20,0.16);
  \fill[tabcol!35] (-0.20,0.055) rectangle (0.20,0.16);
  \draw[tabcol,line width=0.4pt] (-0.20,0.055)--(0.20,0.055);
  \draw[tabcol,line width=0.4pt] (-0.20,-0.05)--(0.20,-0.05);
  \draw[tabcol,line width=0.4pt] (-0.067,-0.16)--(-0.067,0.16);
  \draw[tabcol,line width=0.4pt] (0.067,-0.16)--(0.067,0.16);
\end{scope}}
\providecommand{\icTxt}[2]{\begin{scope}[shift={(#1,#2)}]
  \draw[txtcol,line width=0.5pt,fill=txtcol!8] (-0.165,-0.18) rectangle (0.165,0.18);
  \draw[txtcol,line width=0.55pt,line cap=round] (-0.10,0.10)--(0.10,0.10);
  \draw[txtcol,line width=0.55pt,line cap=round] (-0.10,0.025)--(0.10,0.025);
  \draw[txtcol,line width=0.55pt,line cap=round] (-0.10,-0.05)--(0.10,-0.05);
  \draw[txtcol,line width=0.55pt,line cap=round] (-0.10,-0.125)--(0.03,-0.125);
\end{scope}}
\providecommand{\icImg}[2]{\begin{scope}[shift={(#1,#2)}]
  \draw[imgcol,line width=0.5pt,fill=imgcol!10] (-0.20,-0.16) rectangle (0.20,0.16);
  \fill[imgcol!80] (0.095,0.075) circle (0.036);
  \draw[imgcol,line width=0.5pt,line join=round,fill=imgcol!30]
       (-0.20,-0.16)--(-0.075,0.02)--(0.02,-0.055)--(0.13,0.055)--(0.20,-0.02)--(0.20,-0.16)--cycle;
\end{scope}}
\providecommand{\icImgOff}[2]{\begin{scope}[shift={(#1,#2)}]
  \draw[black!40,line width=0.5pt,fill=black!6] (-0.20,-0.16) rectangle (0.20,0.16);
  \draw[black!38,line width=0.5pt] (0.095,0.075) circle (0.036);
  \draw[black!38,line width=0.5pt,line join=round]
       (-0.20,-0.16)--(-0.075,0.02)--(0.02,-0.055)--(0.13,0.055)--(0.20,-0.02);
  \draw[red!62,line width=0.8pt,line cap=round] (-0.225,0.185)--(0.225,-0.185);
  \draw[red!62,line width=0.8pt,line cap=round] (-0.225,-0.185)--(0.225,0.185);
\end{scope}}
\resizebox{0.98\linewidth}{!}{%
\begin{tikzpicture}[
  font=\small,
  >={Stealth[length=2.2mm,width=1.5mm]},
  line cap=round, line join=round,
  bandframe/.style={draw=black!45, dash pattern=on 2.6pt off 2.4pt,
                    line width=0.7pt, rounded corners=8pt},
  bandtitle/.style={fill=white, inner sep=2.4pt, font=\scriptsize\bfseries,
                    text=black!72},
  mod/.style  ={rectangle, rounded corners=2.5pt, line width=0.6pt, font=\scriptsize,
               text width=19.5mm, minimum height=7mm, align=left, inner sep=2.5pt},
  pred/.style ={rectangle, rounded corners=2.5pt, line width=0.6pt, font=\footnotesize,
               minimum width=8mm, minimum height=7mm, align=center, inner sep=2.5pt},
  dbox/.style ={rectangle, draw=calcol!70, line width=0.7pt, rounded corners=3pt,
               text width=22mm, fill=white, align=center, inner sep=4pt},
  calbox/.style={rectangle, draw=calcol!70, line width=0.7pt, rounded corners=3pt,
               text width=30mm, fill=white, align=left, inner sep=4pt},
  ext/.style  ={rectangle, draw=basecol, dash pattern=on 2.2pt off 1.8pt,
               line width=0.8pt, rounded corners=3pt,
               text width=21mm, fill=basecol!6, align=center, inner sep=4pt},
  fin/.style  ={rectangle, draw=calcol!70, line width=0.7pt, rounded corners=3pt,
               text width=26mm, fill=calcol!10, align=center, inner sep=3.5pt},
  arr/.style  ={->, draw=black!75, line width=0.6pt},
  lin/.style  ={draw=black!75, line width=0.6pt},
  rowlab/.style={font=\footnotesize, text=black!78, align=left},
  ptitle/.style={font=\footnotesize\bfseries, text=black!85},
  psub/.style ={font=\footnotesize, text=calcol!75},
  basebar/.style={line width=1.6pt, draw=basecol!85, line cap=round},
  globbar/.style={line width=1.4pt, draw=black!42, line cap=round},
  calbar/.style ={line width=2.8pt, draw=calcol!80, line cap=round},
  endt/.style   ={line width=1.0pt, line cap=round},
  flankL/.style={font=\scriptsize, anchor=east},
  flankR/.style={font=\scriptsize, anchor=west},
  lead/.style ={draw=black!30, line width=0.35pt}
]
\def\CX{8.68}    
\draw[bandframe] (-0.45,2.85) rectangle (14.20,6.25);
\node[bandtitle,anchor=west] at (0.05,6.25) {The calibration layer};
\draw[bandframe] (-0.45,-2.85) rectangle (14.20,2.20);
\node[bandtitle,anchor=west] at (0.05,2.20)
{Worked example \textnormal{for $\log(\mathrm{rent})$ at a $95\%$ target,
       where a global $\hat q$ does not adapt across regimes}};
\node[mod,draw=tabcol,fill=tabcol!7,anchor=west] (m1) at (-0.30,5.55) {\hspace{5.0mm}Table $x^{(1)}$};
\icTab{0.00}{5.55}
\node[mod,draw=txtcol,fill=txtcol!7,anchor=west] (m2) at (-0.30,4.55) {\hspace{5.0mm}Text $x^{(2)}$};
\icTxt{0.00}{4.55}
\node[mod,draw=imgcol,fill=imgcol!7,anchor=west] (m3) at (-0.30,3.55) {\hspace{5.0mm}Image $x^{(3)}$};
\icImg{0.00}{3.55}
\node[pred,draw=tabcol,fill=tabcol!6] (g1) at (2.87,5.55) {$g_1$};
\node[pred,draw=txtcol,fill=txtcol!6] (g2) at (2.87,4.55) {$g_2$};
\node[pred,draw=imgcol,fill=imgcol!6] (g3) at (2.87,3.55) {$g_3$};
\foreach \i in {1,2,3} \draw[arr] (m\i) -- (g\i);
\node[ext,anchor=south] (b) at (5.15,0 |- m3.south)
  {\scriptsize Base $[\ell,u]$\\[0.5pt]
   \scriptsize\itshape quantile/point};
\coordinate (bt) at ([yshift=3.5mm]b.north);
\node[dbox,anchor=south] (d) at (5.15,0 |- bt)
  {\scriptsize\textbf{Disagreement}\\[2pt]
   \scriptsize $d(x)=\operatorname{sd}_k(\hat y_k)$};
\coordinate (gx) at ([xshift=-3.2mm]d.west);
\draw[lin,rounded corners=1.2pt] (g1.east) -- (gx |- g1) -- (gx |- d);
\draw[lin,rounded corners=1.2pt] (g2.east) -- (gx |- g2) -- (gx |- d);
\draw[lin,rounded corners=1.2pt] (g3.east) -- (gx |- g3) -- (gx |- d);
\draw[arr] (gx |- d) -- (d.west);
\node[calbox,anchor=south] (cal) at ({\CX},0 |- b.south)
  {\scriptsize\textbf{Scaled CQR}\\[1.5pt]
   \scriptsize $a_\gamma(x)=\sqrt{1+\gamma\,d(x)^2}$\\[1.5pt]
   \scriptsize $s_i=e_i^+/a_\gamma(X_i)$\\[7pt]
   \scriptsize\textbf{Mondrian}\\[1.5pt]
   \scriptsize $\hat q_h$ per stratum $h$\\[1.5pt]
   \scriptsize $h$: $d$-bin or regime};
\draw[calcol!35,dash pattern=on 2.0pt off 1.8pt,line width=0.6pt]
      ([xshift=2.5mm]cal.west) -- ([xshift=-2.5mm]cal.east);
\node[fill=white,inner sep=1.6pt,font=\scriptsize\itshape,text=calcol!85]
      at (cal.center) {or};
\coordinate (jx) at ([xshift=-4.0mm]cal.west);
\coordinate (jm) at (jx |- cal.center);
\draw[lin,rounded corners=1.2pt] (d.east) -- (jx |- d) -- (jm);
\draw[lin,rounded corners=1.2pt] (b.east) -- (jx |- b) -- (jm);
\draw[arr] (jm) -- (cal.west);
\node[fin,anchor=west] (o1) at ([xshift=8mm,yshift= 6.7mm]cal.east)
  {\scriptsize $[\ell-\hat qa_\gamma,\ u+\hat qa_\gamma]$};
\node[fin,anchor=west] (o2) at ([xshift=8mm,yshift=-6.7mm]cal.east)
  {\scriptsize $[\ell-\hat q_h,\ u+\hat q_h]$};
\draw[calcol!40,densely dashed,line width=0.5pt] (o1.south) -- (o2.north);
\node[fill=white,inner sep=1.6pt,font=\scriptsize,text=calcol!85]
      at (o1 |- cal.center) {Calibrated interval};
\draw[arr] ([yshift= 6.7mm]cal.east) -- (o1.west);
\draw[arr] ([yshift=-6.7mm]cal.east) -- (o2.west);
\def\SS{1.80}
\def\PA{4.80}
\def\PB{8.60}
\def\PC{12.40}
\pgfmathsetmacro{\xBimg}{\PB+(6.85-7.6)*\SS}
\pgfmathsetmacro{\xBtab}{\PB+(7.60-7.6)*\SS}
\pgfmathsetmacro{\xBtxt}{\PB+(7.90-7.6)*\SS}
\pgfmathsetmacro{\xCtab}{\PC+(7.60-7.6)*\SS}
\pgfmathsetmacro{\xCtxt}{\PC+(7.90-7.6)*\SS}
\pgfmathsetmacro{\xCimg}{\PC-0.56}
\node[rowlab,anchor=west] at (-0.28,0.69) {Per-source $\hat y_k$};
\node[rowlab,anchor=west,text=calcol!90] at (-0.28,-0.30) {Scale or stratum};
\node[rowlab,anchor=west,text=basecol!90] at (-0.28,-0.95) {Base interval $[\ell,u]$};
\node[rowlab,anchor=west,text=black!52] at (-0.28,-1.65) {Global $\hat q$ (CQR, $\gamma{=}0$)};
\node[rowlab,anchor=west,text=calcol!90] at (-0.28,-2.35) {Modality-aware (ours)};
\node[ptitle] at (\PA,1.68) {Sources agree};
\node[psub]   at (\PA,-0.30) {$d(x){=}0.02,\ a_\gamma{=}1.00$};
\icImg{\PA-0.54}{1.15} \icTab{\PA}{1.15} \icTxt{\PA+0.54}{1.15}
\draw[lead] (\PA-0.54,0.95) -- ({\PA+(7.57-7.6)*\SS},0.77);
\draw[lead] (\PA,0.95)      -- ({\PA+(7.60-7.6)*\SS},0.77);
\draw[lead] (\PA+0.54,0.95) -- ({\PA+(7.63-7.6)*\SS},0.77);
\draw[black!22,line width=0.5pt] (\PA-1.60,0.69) -- (\PA+1.60,0.69);
\fill[imgcol] ({\PA+(7.57-7.6)*\SS},0.69) circle (1.6pt);
\fill[tabcol] ({\PA+(7.60-7.6)*\SS},0.69) circle (1.6pt);
\fill[txtcol] ({\PA+(7.63-7.6)*\SS},0.69) circle (1.6pt);
\node[font=\scriptsize,anchor=north] at (\PA,0.55)
   {\textcolor{imgcol}{7.57},\ \textcolor{tabcol}{7.60},\ \textcolor{txtcol}{7.63}};
\draw[basebar] ({\PA+(7.40-7.6)*\SS},-0.95) -- ({\PA+(7.80-7.6)*\SS},-0.95);
\draw[endt,basecol!85] ({\PA+(7.40-7.6)*\SS},-1.05) -- ({\PA+(7.40-7.6)*\SS},-0.85);
\draw[endt,basecol!85] ({\PA+(7.80-7.6)*\SS},-1.05) -- ({\PA+(7.80-7.6)*\SS},-0.85);
\node[flankL,text=basecol!90] at ({\PA+(7.40-7.6)*\SS-0.10},-0.95) {7.40};
\node[flankR,text=basecol!90] at ({\PA+(7.80-7.6)*\SS+0.10},-0.95) {7.80};
\draw[globbar] ({\PA+(7.15-7.6)*\SS},-1.65) -- ({\PA+(8.05-7.6)*\SS},-1.65);
\draw[endt,black!42] ({\PA+(7.15-7.6)*\SS},-1.74) -- ({\PA+(7.15-7.6)*\SS},-1.56);
\draw[endt,black!42] ({\PA+(8.05-7.6)*\SS},-1.74) -- ({\PA+(8.05-7.6)*\SS},-1.56);
\node[flankL,text=black!52] at ({\PA+(7.15-7.6)*\SS-0.10},-1.65) {7.15};
\node[flankR,text=black!52] at ({\PA+(8.05-7.6)*\SS+0.10},-1.65) {8.05};
\draw[calbar] ({\PA+(7.20-7.6)*\SS},-2.35) -- ({\PA+(8.00-7.6)*\SS},-2.35);
\draw[endt,calcol!85] ({\PA+(7.20-7.6)*\SS},-2.46) -- ({\PA+(7.20-7.6)*\SS},-2.24);
\draw[endt,calcol!85] ({\PA+(8.00-7.6)*\SS},-2.46) -- ({\PA+(8.00-7.6)*\SS},-2.24);
\node[flankL,text=calcol!90] at ({\PA+(7.20-7.6)*\SS-0.10},-2.35) {7.20};
\node[flankR,text=calcol!90] at ({\PA+(8.00-7.6)*\SS+0.10},-2.35) {8.00};
\node[ptitle] at (\PB,1.68) {Sources conflict};
\node[psub]   at (\PB,-0.30) {$d(x){=}0.44,\ a_\gamma{=}1.60$};
\icImg{\xBimg}{1.15} \icTab{\xBtab}{1.15} \icTxt{\xBtxt}{1.15}
\draw[lead] ({\PB+(6.85-7.6)*\SS},0.95) -- ({\PB+(6.85-7.6)*\SS},0.77);
\draw[lead] ({\PB+(7.60-7.6)*\SS},0.95) -- ({\PB+(7.60-7.6)*\SS},0.77);
\draw[lead] ({\PB+(7.90-7.6)*\SS},0.95) -- ({\PB+(7.90-7.6)*\SS},0.77);
\draw[black!22,line width=0.5pt] (\PB-1.60,0.69) -- (\PB+1.60,0.69);
\fill[imgcol] ({\PB+(6.85-7.6)*\SS},0.69) circle (1.6pt);
\fill[tabcol] ({\PB+(7.60-7.6)*\SS},0.69) circle (1.6pt);
\fill[txtcol] ({\PB+(7.90-7.6)*\SS},0.69) circle (1.6pt);
\node[font=\scriptsize,text=imgcol,anchor=north] at ({\PB+(6.85-7.6)*\SS},0.55) {6.85};
\node[font=\scriptsize,anchor=north] at ({\PB+(7.75-7.6)*\SS},0.55)
   {\textcolor{tabcol}{7.60},\ \textcolor{txtcol}{7.90}};
\draw[basebar] ({\PB+(7.40-7.6)*\SS},-0.95) -- ({\PB+(7.80-7.6)*\SS},-0.95);
\draw[endt,basecol!85] ({\PB+(7.40-7.6)*\SS},-1.05) -- ({\PB+(7.40-7.6)*\SS},-0.85);
\draw[endt,basecol!85] ({\PB+(7.80-7.6)*\SS},-1.05) -- ({\PB+(7.80-7.6)*\SS},-0.85);
\node[flankL,text=basecol!90] at ({\PB+(7.40-7.6)*\SS-0.10},-0.95) {7.40};
\node[flankR,text=basecol!90] at ({\PB+(7.80-7.6)*\SS+0.10},-0.95) {7.80};
\draw[globbar] ({\PB+(7.15-7.6)*\SS},-1.65) -- ({\PB+(8.05-7.6)*\SS},-1.65);
\draw[endt,black!42] ({\PB+(7.15-7.6)*\SS},-1.74) -- ({\PB+(7.15-7.6)*\SS},-1.56);
\draw[endt,black!42] ({\PB+(8.05-7.6)*\SS},-1.74) -- ({\PB+(8.05-7.6)*\SS},-1.56);
\node[flankL,text=black!52] at ({\PB+(7.15-7.6)*\SS-0.10},-1.65) {7.15};
\node[flankR,text=black!52] at ({\PB+(8.05-7.6)*\SS+0.10},-1.65) {8.05};
\draw[calbar] ({\PB+(7.08-7.6)*\SS},-2.35) -- ({\PB+(8.12-7.6)*\SS},-2.35);
\draw[endt,calcol!85] ({\PB+(7.08-7.6)*\SS},-2.46) -- ({\PB+(7.08-7.6)*\SS},-2.24);
\draw[endt,calcol!85] ({\PB+(8.12-7.6)*\SS},-2.46) -- ({\PB+(8.12-7.6)*\SS},-2.24);
\node[flankL,text=calcol!90] at ({\PB+(7.08-7.6)*\SS-0.10},-2.35) {7.08};
\node[flankR,text=calcol!90] at ({\PB+(8.12-7.6)*\SS+0.10},-2.35) {8.12};
\node[ptitle] at (\PC,1.68) {Modality missing};
\node[psub]   at (\PC,-0.30) {stratum $\hat q_h{=}0.30$};
\icImgOff{\xCimg}{1.15} \icTab{\xCtab}{1.15} \icTxt{\xCtxt}{1.15}
\draw[lead] ({\PC+(7.60-7.6)*\SS},0.95) -- ({\PC+(7.60-7.6)*\SS},0.77);
\draw[lead] ({\PC+(7.90-7.6)*\SS},0.95) -- ({\PC+(7.90-7.6)*\SS},0.77);
\draw[black!22,line width=0.5pt] (\PC-1.60,0.69) -- (\PC+1.60,0.69);
\fill[tabcol] ({\PC+(7.60-7.6)*\SS},0.69) circle (1.6pt);
\fill[txtcol] ({\PC+(7.90-7.6)*\SS},0.69) circle (1.6pt);
\node[font=\scriptsize,anchor=north] at ({\PC+(7.75-7.6)*\SS},0.55)
   {\textcolor{tabcol}{7.60},\ \textcolor{txtcol}{7.90}};
\draw[basebar] ({\PC+(7.40-7.6)*\SS},-0.95) -- ({\PC+(7.80-7.6)*\SS},-0.95);
\draw[endt,basecol!85] ({\PC+(7.40-7.6)*\SS},-1.05) -- ({\PC+(7.40-7.6)*\SS},-0.85);
\draw[endt,basecol!85] ({\PC+(7.80-7.6)*\SS},-1.05) -- ({\PC+(7.80-7.6)*\SS},-0.85);
\node[flankL,text=basecol!90] at ({\PC+(7.40-7.6)*\SS-0.10},-0.95) {7.40};
\node[flankR,text=basecol!90] at ({\PC+(7.80-7.6)*\SS+0.10},-0.95) {7.80};
\draw[globbar] ({\PC+(7.15-7.6)*\SS},-1.65) -- ({\PC+(8.05-7.6)*\SS},-1.65);
\draw[endt,black!42] ({\PC+(7.15-7.6)*\SS},-1.74) -- ({\PC+(7.15-7.6)*\SS},-1.56);
\draw[endt,black!42] ({\PC+(8.05-7.6)*\SS},-1.74) -- ({\PC+(8.05-7.6)*\SS},-1.56);
\node[flankL,text=black!52] at ({\PC+(7.15-7.6)*\SS-0.10},-1.65) {7.15};
\node[flankR,text=black!52] at ({\PC+(8.05-7.6)*\SS+0.10},-1.65) {8.05};
\draw[calbar] ({\PC+(7.10-7.6)*\SS},-2.35) -- ({\PC+(8.10-7.6)*\SS},-2.35);
\draw[endt,calcol!85] ({\PC+(7.10-7.6)*\SS},-2.46) -- ({\PC+(7.10-7.6)*\SS},-2.24);
\draw[endt,calcol!85] ({\PC+(8.10-7.6)*\SS},-2.46) -- ({\PC+(8.10-7.6)*\SS},-2.24);
\node[flankL,text=calcol!90] at ({\PC+(7.10-7.6)*\SS-0.10},-2.35) {7.10};
\node[flankR,text=calcol!90] at ({\PC+(8.10-7.6)*\SS+0.10},-2.35) {8.10};
\end{tikzpicture}}
\caption{\textbf{Modality-aware conformal calibration layer.} \emph{Top.} Per-source predictors produce $\hat y_k=g_k(x^{(k)})$; their pointwise standard deviation is the disagreement $d(x)$. A fixed base predictor supplies $[\ell(x),u(x)]$. Disagreement scaling uses $d(x)$ to adapt the conformal correction while retaining marginal split-conformal coverage, whereas Mondrian calibration uses fixed strata defined by disagreement or modality availability. \emph{Bottom.} In constructed $95\%$ examples with $[\ell,u]=[7.40,7.80]$ and $\gamma=8$, the raw-score quantile is $0.25$ and the scaled-score quantile is $0.20$. Scaling narrows the agreement interval by $11\%$ and widens the conflict interval by $16\%$; the missing-modality case instead uses its stratum-specific quantile $\hat q_h=0.30$.}
\label{fig:overview}
\end{figure}

Reliable uncertainty estimates are essential when predictions inform consequential decisions. Distribution-free methods such as conformal prediction provide finite-sample coverage guarantees for arbitrary machine learning models \citep{vovk2005algorithmic,angelopoulos2023gentle}. Real prediction problems, however, often combine multiple input modalities, such as structured attributes, text descriptions, images, and sensor streams \citep{baltrusaitis2019survey}. Fusing these sources can improve point accuracy \citep{gao2020survey}, yet a single marginal coverage guarantee can mask systematic differences in interval reliability between regimes of the input \citep{vovk2005algorithmic,shafer2008tutorial,angelopoulos2023gentle}. In multi-modal systems, intervals should therefore remain reliable when modalities agree, when they conflict, and when a modality is missing at test time, a setting increasingly recognized as central in deployed models \citep{wu2026missingmodalitysurvey,ma2021smil,wang2023shaspec}.

We study whether disagreement among per-source predictors can serve as a useful calibration covariate. Agreement may indicate a stable regime, while conflict can make a single global quantile inefficient in easy cases and lead to undercoverage in harder subgroups despite valid marginal coverage \citep{vovk2012conditional,bostrom2020mondrian,barber2021limits}. Prior multi-modal work uses expected disagreement among separately trained unimodal classifiers as an ingredient in a lower bound on information synergy \citep{liang2024multimodal}. We instead use the dispersion of the per-source regression predictions at each example as a calibration covariate constructed before any conformal scores are computed. Recent work combines conformal regression with neural features from multi-modal inputs \citep{bose2024multimodalconformal}. In our setting, source disagreement and modality availability serve as calibration variables defined independently of the base predictor class. We use disagreement to continuously scale conformal scores, while both disagreement and modality availability can define fixed strata for Mondrian calibration \citep{vovk2005algorithmic,bostrom2020mondrian}. \autoref{fig:overview} illustrates the method. The paper makes four contributions.

\begingroup
\linespread{1.04}\selectfont
\begin{enumerate}
  \setlength{\itemsep}{2pt plus 1pt}
  \setlength{\parsep}{1pt}
  \setlength{\topsep}{3pt plus 1pt}
  \item We define a disagreement score at each example from per-source regression predictions for model-agnostic conformal calibration.
  \item We incorporate this score into a normalized split-conformal CQR method using a source-disagreement scale selected before calibration, and state the exact split conditions required for validity.
  \item We provide a Mondrian construction for coverage within pre-specified strata based on modality availability or disagreement.
  \item We evaluate the conformal calibration layer on the Swiss Real Estate Dataset (SRED), Mercari, Pawpularity, and IMDB-WIKI, including benchmarks across base predictors and stress tests with missing modalities that isolate the calibration wrapper from the fused predictor.
\end{enumerate}
\endgroup

\section{Background: Conformal Calibration}
\label{sec:cp-bg}

Given a predictor fixed before calibration and calibration observations exchangeable with the test observation, split conformal prediction constructs finite-sample valid prediction sets \citep{vovk2005algorithmic,shafer2008tutorial,lei2018distribution}. For ordered lower and upper estimates $(\ell(x),u(x))$ and calibration pairs $(X_i,Y_i)$ exchangeable with the test pair, conformalized quantile regression (CQR) uses calibration scores
\begin{equation}
  e_i=\max\{\ell(X_i)-Y_i,\;Y_i-u(X_i)\}
  \label{eq:cqr_score}
\end{equation}
and adjusts test intervals by the empirical split-conformal quantile \citep{romano2019cqr}. For a realized calibration observation we write the raw score as $e_i$. Throughout, we assume $\ell(x)\leq u(x)$. If fitted quantile estimates cross, we replace them before calibration by
\[
  \ell^\star(x)=\min\{\ell(x),u(x)\},\qquad
  u^\star(x)=\max\{\ell(x),u(x)\},
\]
and treat this deterministic rearrangement as part of the fixed prediction rule. We subsequently relabel $\ell^\star$ and $u^\star$ as $\ell$ and $u$. The score in \autoref{eq:cqr_score} may be negative when $Y_i$ already lies inside the base interval. This is standard CQR, and we refer to the score kept with its sign as the signed score. Clipping at zero is also valid but more conservative. Point predictors are covered as the degenerate case $\ell=u$. For a point predictor $\hat f$, setting $\ell(x)=u(x)=\hat f(x)$ gives $e_i=|Y_i-\hat f(X_i)|$, so split conformal prediction on absolute residuals is a special case of the same construction.

Mondrian conformal prediction computes separate calibration quantiles within pre-specified strata. The name comes from the Mondrian taxonomies of \citet{vovk2005algorithmic}, and \citet{vovk2012conditional} establishes category-wise validity for the inductive variant. The procedure can equally be read as stratified, or group-conditional, split conformal calibration \citep{bostrom2020mondrian}. Its guarantee is finite-stratum rather than arbitrary conditional coverage, which is impossible distribution-free without restrictions \citep{barber2021limits}. It gives a meaningful guarantee when the stratum rule, including any bin edges or clusters, is fixed before the calibration examples are scored. If the strata adapt to the data, their construction must use a separate tuning or reference split.

\section{Method}
\label{sec:method}

\paragraph{Problem setup.}
We observe random pairs $(X,Y)$ with $X\in\mathcal X$, $Y\in\mathbb{R}$, and $X=(X^{(1)},\ldots,X^{(K)})$ composed of $K$ source blocks. A source block is a fixed feature block with its own auxiliary predictor; it may represent an entire semantic modality (such as tabular, text, or image) or a distinct field within one. For an observed feature vector, let $x=(x^{(1)},\ldots,x^{(K)})$. Parenthesized superscripts index source blocks, whereas the subscript $i$ indexes observations. The data are divided into disjoint fitting, tuning, calibration, and test splits. The fitting split trains the base and source predictors, the tuning split absorbs every remaining data-driven choice (including early stopping, stacking or gating weights, and the choice of calibration rule), and the calibration split is reserved for conformal scores. Throughout, the calibration split denotes the held-out set that supplies the conformal scores, in the usual split conformal sense, and is always distinct from the tuning split. Baselines used only for point accuracy, which never enter the conformal layer, follow their own disclosed protocols (\appref{app:repro}). A fixed base predictor supplies ordered endpoints $(\ell(x),u(x))$ defining $[\ell(x),u(x)]$, with point predictors treated as the degenerate case $\ell=u$. For a prescribed miscoverage level $\alpha\in(0,1)$, the goal is to report intervals that are valid at coverage level $1-\alpha$ and remain efficient and reliable across regimes of source disagreement and modality availability.

The method is a conformal calibration layer around a fixed base predictor and per-source auxiliary predictors. It exposes source-wise disagreement, fixes either a scale rule or a finite stratum rule before calibration, and then applies split conformal prediction or CQR. This separation allows the base predictor to be a gradient-boosted tree, a quantile regressor, a neural network, or a Monte Carlo (MC) predictive sampler. The validity claim belongs entirely to the final conformal calibration step. The top of \autoref{fig:overview} sketches the pipeline.

Let $\mathcal D_{\mathrm{pre}}$ denote all data and algorithmic randomness used before calibration, including fitting data, tuning or reference data, pre-processing choices, fitted predictors, early stopping and model selection decisions, bin edges, definitions of missingness regimes, and the selected value of the disagreement-scale parameter $\gamma$ (\autoref{sec:weighted}). Conditional on $\mathcal D_{\mathrm{pre}}$, all functions used to score the calibration and test examples are fixed.

\subsection{Disagreement Between Sources}
\label{sec:disagreement}

On a fitting split, train per-source predictors $g_k:x^{(k)}\mapsto \hat y_k$, each predicting the target directly from one source block. Let $A(x)\subseteq\{1,\ldots,K\}$ be the set of sources available for example $x$, let $K_x=|A(x)|$, and define
\[
  \bar g_A(x)=\frac{1}{K_x}\sum_{k\in A(x)}g_k(x^{(k)}).
\]
The disagreement score is
\begin{equation}
  d(x)=
  \left\{
  \frac{1}{K_x}\sum_{k\in A(x)}
  \left(g_k(x^{(k)})-\bar g_A(x)\right)^2
  \right\}^{1/2}.
  \label{eq:disagreement}
\end{equation}
For $K_x=1$, this definition gives $d(x)=0$. If $K_x=0$, any branch whose score or stratum rule evaluates $d$, including disagreement scaling and scaled Mondrian calibration, is undefined and requires a pre-specified abstention or fallback interval. The corresponding guarantees assume non-empty availability for every calibration and test input. Signed Mondrian calibration on availability does not inherit this $d$-based restriction because neither its score nor its stratum rule evaluates $d$. Both the availability map $A(\cdot)$ and any fallback are part of $\mathcal D_{\mathrm{pre}}$ and must be applied identically to calibration and test examples. The score measures conflict among the per-source predictions $g_k(x^{(k)})$ of the same target, not the variance of one predictor. Comparing its magnitude across different availability patterns can be misleading, so all reported experiments with continuous scaling use one fixed, fully observed source set per dataset. The stress tests with missing modalities instead use the regime label $H_{\mathrm{reg}}$ of \autoref{sec:mondrian} after applying the same pre-specified mask to calibration and test examples. No prediction is invented for an absent source.

\subsection{Disagreement-scaled CQR}
\label{sec:weighted}

The disagreement score enters calibration through the positive scale
\begin{equation}
  a_\gamma(x)=\sqrt{1+\gamma d(x)^2},\qquad \gamma\geq 0,
  \label{eq:scale}
\end{equation}
where $\gamma$ is selected on the tuning split and fixed before calibration. Setting $\gamma=0$ recovers marginal calibration with the clipped score defined below. Dividing $d$ by a fixed positive reference scale $c_d$ can be absorbed into $\gamma$, so $\gamma$ is dimensionless for standardized $d$ and otherwise has units $d^{-2}$. More generally, any finite, measurable, strictly positive scale function fixed before calibration gives the same split-conformal guarantee. We use \autoref{eq:scale} because it is the standard deviation implied by an additive-noise variance that grows linearly in $d(x)^2$, has a non-zero floor at $d(x)=0$, and is the normalized representative of the two-parameter family $\sqrt{\lambda_0+\lambda_1d(x)^2}$ with constants $\lambda_0>0$ and $\lambda_1\geq0$. A common rescaling is absorbed by the conformal quantile, leaving only $\gamma=\lambda_1/\lambda_0$.

For calibration point $i$, let $e_i$ be the CQR score of \autoref{eq:cqr_score}. To prevent greater disagreement from inducing a stronger contraction, pre-specify the clipped score
\begin{equation}
  e_i^+=\max\{e_i,0\},
  \qquad
  s_i=\frac{e_i^+}{a_\gamma(X_i)}.
  \label{eq:scaled_score}
\end{equation}
This normalized non-conformity score uses explicit source disagreement rather than a generic covariate distance or black-box residual model \citep{papadopoulos2008normalized,guan2023localized,colombo2023localcp}. Viewed as a random variable of $(X_i,Y_i)$, this clipped, scaled score is written $S_i^{(\gamma)}$. For calibration size $n$, set $m=\lceil(n+1)(1-\alpha)\rceil$. If $m\leq n$, let $\hat q$ be the $m$th smallest scaled score. Otherwise set $\hat q=+\infty$. Since $\hat q\geq0$ and the base endpoints are ordered as in \autoref{sec:cp-bg}, the set defined by the score threshold and the equivalent interval, which is always non-empty, are
\begin{align}
  C_\gamma(x)
  &=\left\{y:
    \frac{\max\{\ell(x)-y,\;y-u(x),0\}}{a_\gamma(x)}
    \leq \hat q\right\},
  \label{eq:weighted_set}\\
  C_\gamma(x)
  &=\left[\ell(x)-\hat q\,a_\gamma(x),\;
          u(x)+\hat q\,a_\gamma(x)\right].
  \label{eq:weighted_interval}
\end{align}
Clipping is deliberate. Signed CQR can contract an overconservative base interval, but after disagreement normalization a negative quantile would make larger $d(x)$ produce a larger contraction, reversing the intended uncertainty ordering. The non-negative score prevents this inversion without changing the split-conformal proof. Compared with signed CQR under the same fixed scale rule, it is more conservative only when the signed quantile is negative. Both the cross-dataset benchmark and the SRED per-predictor diagnostic use this score during tuning and calibration. The selected scales and final quantiles are recorded for every run (\autoref{sec:setup}, \appref{app:repro}).

\begin{proposition}\label{prop:weighted}
Assume the base interval $(\ell,u)$, the source predictors defining $d$, the pre-processing needed to evaluate them, and $\gamma$ are fixed before computing calibration scores or their quantile, and that $A(x)\neq\emptyset$ for every calibration and test example. If the calibration examples and test example are exchangeable conditional on $\mathcal D_{\mathrm{pre}}$, then
\[
  \Pr\{Y_{n+1}\in C_\gamma(X_{n+1})\mid \mathcal D_{\mathrm{pre}}\}\geq 1-\alpha .
\]
\end{proposition}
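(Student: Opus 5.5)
Conditional on $\mathcal D_{\mathrm{pre}}$, I will reduce the claim to the standard split-conformal quantile lemma applied to the scaled scores $S_i^{(\gamma)}$. The first step is to check that the score map is a fixed deterministic function. Under the hypotheses, $\ell$, $u$, the $g_k$, the availability map $A(\cdot)$, and $\gamma$ are all fixed given $\mathcal D_{\mathrm{pre}}$. Because $A(x)\neq\emptyset$, $K_x\ge1$, so $d(x)$ in \autoref{eq:disagreement} is a finite real number. Hence $a_\gamma(x)=\sqrt{1+\gamma d(x)^2}\ge 1>0$ is finite and strictly positive. Therefore the map
\[
\sigma(x,y)=\frac{\max\{\ell(x)-y,\;y-u(x),0\}}{a_\gamma(x)}
\]
is a fixed measurable function of $(x,y)$, and $S_i^{(\gamma)}=\sigma(X_i,Y_i)$ for $i=1,\ldots,n+1$. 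Since the clipped CQR score $e_i^+=\max\{e_i,0\}$ equals the numerator above, this is exactly the score in \autoref{eq:scaled_score}.

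The second step is to show that membership in the interval is equivalent to a score inequality. By \autoref{eq:weighted_set}, $Y_{n+1}\in C_\gamma(X_{n+1})$ holds if and only if $S_{n+1}^{(\gamma)}\le\hat q$. I will also verify, as a short side check, that \autoref{eq:weighted_interval} describes the same set. Since $a_\gamma>0$ and $\hat q\ge0$, the condition $\max\{\ell-y,y-u,0\}\le\hat q a_\gamma$ is equivalent to $\ell-\hat q a_\gamma\le y\le u+\hat q a_\gamma$. The case $\hat q=+\infty$ is trivial, because then the set is all of $\mathbb R$.

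The third step is the exchangeability argument. Exchangeability of $(X_i,Y_i)_{i=1}^{n+1}$ given $\mathcal D_{\mathrm{pre}}$ transfers to the scores $(S_i^{(\gamma)})_{i=1}^{n+1}$, because $\sigma$ is fixed. I then apply the usual rank argument. If $m\le n$, the event $S_{n+1}\le \hat q$, where $\hat q$ is the $m$th smallest of $S_1,\ldots,S_n$, contains the event that $S_{n+1}$ is among the $m$ smallest of all $n+1$ scores, with ties broken favorably. By exchangeability, that event has probability at least $m/(n+1)\ge 1-\alpha$. If instead $m>n$, then $\hat q=+\infty$ and coverage holds with probability one.

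The main obstacle is ties among the scores, which are quite likely here because clipping puts an atom at $0$. I will handle ties the standard way, without assuming continuity. Let $R$ be the number of $j\le n+1$ with $S_j<S_{n+1}$. The event $S_{n+1}>\hat q$ implies that at least $n-m+1$ of the calibration scores are strictly below $S_{n+1}$, so $S_{n+1}$ is strictly larger than at least $n+1-m+1$ of the others. I then bound the probability of this event via exchangeability, using that $\sum_i \mathbf 1\{S_i \text{ is strictly exceeded by at most } m-1 \text{ others}\}\ge m$ almost surely. Taking expectations gives $\Pr\{S_{n+1}\le\hat q\mid\mathcal D_{\mathrm{pre}}\}\ge m/(n+1)$.
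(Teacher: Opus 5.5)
Your reduction is the paper's proof. Conditional on $\mathcal D_{\mathrm{pre}}$ the score map is fixed, the scores are exchangeable, coverage is exactly $S^{(\gamma)}_{n+1}\leq\hat q$, and the case $m>n$ is trivial. Your check that $A(x)\neq\emptyset$ makes $d$ finite and $a_\gamma\geq 1$ is a detail the paper leaves implicit. The paper then simply invokes Lemma~1 of \citet{tibshirani2019covariate}, which already covers ties.

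Your third paragraph has the right idea, but your final paragraph, where you make the tie argument explicit, is wrong as written. If $S_{n+1}>\hat q$, then the $m$ smallest calibration scores, all $\leq\hat q$, lie strictly below $S_{n+1}$. The relevant count is therefore $m$, not $n-m+1$; your count is too weak and would only yield coverage of about $\alpha$. More seriously, the indicator you sum, ``$S_i$ is strictly exceeded by at most $m-1$ others'', marks the $m$ largest scores. Its expectation bound is a lower bound on an upper-tail event, so it cannot bound the miscoverage probability from above.

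The repair uses the $R$ you defined but never used. Let $R_i=\#\{j\leq n+1:S_j<S_i\}$. Then
\[
\{S_{n+1}\leq\hat q\}=\{R_{n+1}\leq m-1\},
\]
because $S_{n+1}>\hat q$ holds exactly when at least $m$ calibration scores are strictly below $S_{n+1}$. Deterministically $\sum_{i=1}^{n+1}\mathbf 1\{R_i\leq m-1\}\geq m$, since each of the $m$ smallest values in sorted order has at most $m-1$ values strictly below it. Exchangeability then gives
\[
\Pr\{R_{n+1}\leq m-1\mid\mathcal D_{\mathrm{pre}}\}
=\frac{1}{n+1}\,\mathbb E\Bigl[\sum_{i=1}^{n+1}\mathbf 1\{R_i\leq m-1\}\Bigm|\mathcal D_{\mathrm{pre}}\Bigr]
\geq\frac{m}{n+1}\geq 1-\alpha .
\]
So the indicator should read ``$S_i$ strictly exceeds at most $m-1$ others''. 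With that correction your argument is complete, and it is the paper's proof with the quantile lemma proved inline.
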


\noindent The probability is over the calibration examples and the test example conditional on $\mathcal D_{\mathrm{pre}}$. It is not conditional on the realized calibration scores or on a fixed feature value $X=x$.

\begin{proof}
Conditional on $\mathcal D_{\mathrm{pre}}$, the functions $\ell,u,d,a_\gamma$ are fixed, so the clipped, scaled scores
\[
S_i^{(\gamma)}
=
\frac{\max\{\ell(X_i)-Y_i,\;Y_i-u(X_i),0\}}{a_\gamma(X_i)},
\qquad i=1,\ldots,n+1,
\]
are exchangeable. The event $Y_{n+1}\in C_\gamma(X_{n+1})$ is exactly $S_{n+1}^{(\gamma)}\leq\hat q$, so the quantile lemma for exchangeable scores gives the claim \citep[Lemma~1]{tibshirani2019covariate}; Theorem~2.2 of \citet{lei2018distribution} gives the standard split-conformal regression specialization. If $m>n$, it is immediate because $\hat q=+\infty$.
\end{proof}

In the experiments, $\gamma$ is selected from a finite grid on the tuning split. \appref{app:repro} lists the grids and maps the tuned two-parameter form to \autoref{eq:scale}. Split separation is essential because tuning $\gamma$ on the labels used for $\hat q$ would break the exact proof.

\subsection{Disagreement and Regime-Mondrian Calibration}
\label{sec:mondrian}

Let $H:\mathcal X\to\mathcal H$ be a fixed map to a completely specified finite label set $\mathcal H$. We use two versions:
\[
  \begin{aligned}
  H_{\mathrm{dis}}(x)&=\text{disagreement stratum determined by fixed cut points},\\
  H_{\mathrm{reg}}(x)&=\text{modality-availability or masking regime}.
  \end{aligned}
\]
For disagreement strata, let $-\infty=b_0\leq b_1\leq\cdots\leq b_J=+\infty$ be cut points fixed before calibration and set $H_{\mathrm{dis}}(x)=j$, $j\in\{1,\ldots,J\}$, when $b_{j-1}\leq d(x)<b_j$. An observation on an edge therefore enters the upper stratum. Repeated cut points create an empty stratum, which remains in the fixed codomain rather than being silently merged. The codomain of $H_{\mathrm{reg}}$ is likewise a pre-specified list of modality-availability patterns. Empty or undersized strata receive $\hat q_h=+\infty$ under the convention below. The formal construction does not pool them with another stratum.
In the signed version, compute $\hat q_h$ as the split-conformal order statistic of the CQR scores $e_i$ restricted to calibration points satisfying $H(X_i)=h$. Define the Mondrian conformal set by the score threshold
\[
  C_h(x)
  =
  \left\{
  y:
  \max\{\ell(x)-y,\;y-u(x)\}\leq \hat q_h
  \right\}.
\]
When non-empty, this set has endpoints $[\ell(x)-\hat q_h,\;u(x)+\hat q_h]$.
In the scaled version, compute $\hat q_h$ from the non-negative scaled scores $s_i$ restricted to $H(X_i)=h$, with $\gamma$ selected on the tuning split and fixed exactly as in \autoref{sec:weighted}, and define
\[
C_h(x)=
\left\{y:
\frac{\max\{\ell(x)-y,\;y-u(x),0\}}{a_\gamma(x)}
\leq \hat q_h
\right\}.
\]
Its endpoint representation is $[\ell(x)-\hat q_h a_\gamma(x),\;u(x)+\hat q_h a_\gamma(x)]$.

\begin{proposition}\label{prop:mondrian}
For any $h\in\mathcal H$ satisfying, almost surely on the realizations under consideration,
\[
\Pr\{H(X_{n+1})=h\mid\mathcal D_{\mathrm{pre}}\}>0,
\]
let $n_h$ be the number of calibration examples with $H(X_i)=h$ and let $m_h=\lceil(n_h+1)(1-\alpha)\rceil$. If $m_h\leq n_h$, set $\hat q_h$ to the $m_h$th order statistic of the chosen signed or scaled scores in that stratum, with $C_h$ the corresponding conformal set above. Otherwise set $\hat q_h=+\infty$. Suppose $H$ is fixed using only data in $\mathcal D_{\mathrm{pre}}$ and the calibration examples and test example are exchangeable conditional on $\mathcal D_{\mathrm{pre}}$. Whenever evaluating $H$ or the chosen score requires $d$, also assume $A(X_i)\neq\emptyset$ for $i=1,\ldots,n+1$. Then
\[
  \Pr\{Y_{n+1}\in C_h(X_{n+1})
  \mid H(X_{n+1})=h,\mathcal D_{\mathrm{pre}}\}
  \geq 1-\alpha .
\]
\end{proposition}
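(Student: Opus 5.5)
Conditioning on $\mathcal D_{\mathrm{pre}}$ fixes $H$, $\ell$, $u$, $d$ and $a_\gamma$, so each example's label $H(X_i)$ and its score (signed $e_i$ or scaled $s_i$) are fixed measurable functions of $(X_i,Y_i)$. The plan is to reduce the claim to the exchangeable quantile lemma already used for \autoref{prop:weighted}, applied inside the stratum. The natural device is a second layer of conditioning. We condition on the event $H(X_{n+1})=h$ and on the unordered multiset of pairs $\{(X_i,Y_i)\}_{i=1}^{n+1}$ (or, equivalently, on the set $I_h=\{i\le n+1: H(X_i)=h\}$ together with the multiset of scores in it). The positivity assumption $\Pr\{H(X_{n+1})=h\mid\mathcal D_{\mathrm{pre}}\}>0$ makes the conditional probability well defined.

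The key step is the following conditional exchangeability fact. Given the multiset of the $n+1$ data points, joint exchangeability makes every assignment of points to indices equally likely. Further conditioning on $H(X_{n+1})=h$ restricts these assignments to those placing a point with label $h$ in slot $n+1$. By symmetry, the test point is then uniformly distributed over the $n_h+1$ points of stratum $h$, where $n_h$ counts the calibration points in that stratum; note that $n_h$ is determined by the conditioning. I would check this by a direct counting argument over permutations $\pi$ of $\{1,\ldots,n+1\}$. Each permutation with $H(X_{\pi(n+1)})=h$ has equal conditional weight, and the number of such permutations sending a given in-stratum point to slot $n+1$ does not depend on which point it is. Consequently the test score $S_{n+1}$ is equally likely to occupy any rank among the $n_h+1$ in-stratum scores, with ties only helping.

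Given this, the standard argument applies. If $m_h\le n_h$, the event $S_{n+1}>\hat q_h$, where $\hat q_h$ is the $m_h$th smallest of the $n_h$ calibration scores in the stratum, implies that $S_{n+1}$ is strictly above the $m_h$th smallest of the $n_h+1$ in-stratum scores. This has conditional probability at most $(n_h+1-m_h)/(n_h+1)\le\alpha$. If $m_h>n_h$, then $\hat q_h=+\infty$ and coverage is trivial; this case includes $n_h=0$ and undersized strata. It remains to note, as in \autoref{prop:weighted}, that $Y_{n+1}\in C_h(X_{n+1})$ is exactly the event $S_{n+1}\le \hat q_h$, for both the signed and the scaled score. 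The assumption $A(X_i)\neq\emptyset$ is needed only so that $d$, $H_{\mathrm{dis}}$ and $a_\gamma$ are defined. Averaging the bound over the conditioning multiset, and then over $n_h$, gives the stated inequality conditional on $H(X_{n+1})=h$ and $\mathcal D_{\mathrm{pre}}$.

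The main obstacle is stating the conditional exchangeability step rigorously when $n_h$ is random and the pairs may be continuous or tied. I would handle this through the permutation-counting formulation rather than through densities, which avoids measure-theoretic subtleties. Alternatively, one could cite the category-wise validity result of \citet{vovk2012conditional} for inductive Mondrian conformal prediction as the formal backing.
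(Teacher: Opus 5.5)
Your proof is correct, but it conditions on a different object than the paper does.

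The paper conditions on $\mathcal D_{\mathrm{pre}}$ and on the realized membership vector $(H(X_1),\ldots,H(X_{n+1}))$ with $H(X_{n+1})=h$. Since this event is invariant under permutations of the indices in $I_h\cup\{n+1\}$, the in-stratum examples remain exchangeable. The quantile argument of Proposition~\ref{prop:weighted} then gives coverage at least $m_h/(n_h+1)$, and the bound is averaged over the finitely many membership vectors. You instead condition on the bag of all $n+1$ pairs together with $H(X_{n+1})=h$. You show by counting permutations that the test point is uniform over the $n_h+1$ in-stratum elements, and then redo the rank argument by hand.

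Your route is more self-contained. It re-derives the quantile bound, makes the treatment of ties explicit, and never needs the separate claim that a subsequence selected by the label vector stays exchangeable. The paper's route is shorter and reuses Proposition~\ref{prop:weighted} directly. Because $\mathcal H$ is finite, it also conditions, beyond $\mathcal D_{\mathrm{pre}}$, only on finitely many events of positive probability.

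That last point means your remark about avoiding measure-theoretic subtleties is slightly backward: conditioning on the bag is the more delicate step. Write $Z_i=(X_i,Y_i)$ and $Z_\sigma=(Z_{\sigma(1)},\ldots,Z_{\sigma(n+1)})$. Your step is made rigorous by the symmetrization identity $\Pr\{Z\in E\mid\text{bag}\}=\frac{1}{(n+1)!}\sum_{\sigma}\mathds{1}\{Z_\sigma\in E\}$. Equivalently, you can average the indicator of the joint event $\{Y_{n+1}\notin C_h(X_{n+1}),\,H(X_{n+1})=h\}$ over all permutations without conditioning at all.

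There are two small slips. First, conditioning on the full bag is not equivalent to conditioning on $I_h$ plus the multiset of in-stratum scores; the latter is closer to the paper's argument. Second, your final averaging over $n_h$ is vacuous, because the bag together with the event $H(X_{n+1})=h$ already determines $n_h$.
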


\begin{proof}
Condition on $\mathcal D_{\mathrm{pre}}$ and on the realized membership vector $(H(X_1),\ldots,H(X_{n+1}))$ with $H(X_{n+1})=h$. This fixes the in-stratum index set $I_h=\{i\leq n:H(X_i)=h\}$ and $n_h=|I_h|$. Since $H$ is a fixed function given $\mathcal D_{\mathrm{pre}}$, this conditioning event is invariant under every permutation of the examples indexed by $I_h\cup\{n+1\}$. Each such example enters the event only through the symmetric requirement $H(X)=h$. Hence, by the assumed exchangeability of the $n+1$ examples given $\mathcal D_{\mathrm{pre}}$, the examples indexed by $I_h\cup\{n+1\}$ remain exchangeable under this conditioning, and so do their scores. Applying the same exchangeability argument as in Proposition~\ref{prop:weighted} to the $n_h$ calibration scores in the stratum and the test score gives coverage at least $m_h/(n_h+1)\geq 1-\alpha$. The claim is immediate when $m_h>n_h$ because then $\hat q_h=+\infty$. Averaging over the membership vectors consistent with $H(X_{n+1})=h$ gives the statement \citep[cf.][]{vovk2012conditional}.
\end{proof}

Three remarks delimit the guarantee. First, a finite $95\%$ split-conformal quantile requires at least $19$ calibration scores overall and in each protected stratum. Pooling small strata forfeits the stated guarantee conditional on the stratum. The simulation in \appref{app:smallsample} studies calibration budgets near this floor. Second, Proposition~\ref{prop:mondrian} controls one future interval at a time. For a pre-specified batch of $B$ intervals, calibration at miscoverage level $\alpha/B$ and the union bound control the probability of any miscoverage by $\alpha$ without assuming independent errors. Taking $\alpha/|\mathcal H|$ is the special case with one interval per stratum. Exchangeability is still required, and Bonferroni correction cannot repair a stratum rule that depends on the data or create validity conditional on the covariates. Its smaller per-interval level also increases the required calibration size. Third, corrections for simultaneous coverage are unrelated to testing multiple metrics across correlated seeds or base predictors.

\raggedbottom

Algorithm~\ref{algo:macc} summarizes the full procedure with every pre-calibration choice fixed.

\begingroup
\setlength{\intextsep}{6pt}
\begin{algorithm}[H]
\small
\caption{Modality-aware conformal calibration}
\label{algo:macc}
\begin{algorithmic}[1]
\REQUIRE Disjoint fit, tune, and calibration splits $\mathcal D_{\mathrm{fit}},\mathcal D_{\mathrm{tune}},\mathcal D_{\mathrm{cal}}$, with $n$ calibration observations; test input $x$; miscoverage $\alpha$; mode $\in\{\text{scaled},\text{Mondrian}\}$; for Mondrian mode, score variant $\in\{\text{signed},\text{scaled}\}$; whenever disagreement is required, assume $A(X_i)\neq\emptyset$ for every calibration input
\ENSURE Calibrated prediction set $C(x)$
\STATE Fit the base interval rule and every required source predictor $g_k$ on $\mathcal D_{\mathrm{fit}}$, with any pre-specified use of $\mathcal D_{\mathrm{tune}}$ such as early-stopping monitoring or the fitting of stacking or gating weights; rearrange the base endpoints so that $\ell(v)\leq u(v)$
\STATE Use $\mathcal D_{\mathrm{tune}}$ for all remaining data-driven choices, including $\gamma$ when scaling is used and the complete Mondrian rule $H:\mathcal X\to\mathcal H$; freeze all choices before calibration
\STATE When disagreement is required, form $A(X_i)$ for each calibration input and $A(x)$ for the test input
\IF{disagreement is required and $A(x)=\emptyset$}
    \STATE \textbf{return} the pre-specified fallback interval $C_{\mathrm{fb}}(x)$; this branch lies outside Propositions~\ref{prop:weighted} and~\ref{prop:mondrian}
\ENDIF
\STATE Compute $d(X_i)$ and $d(x)$ from \autoref{eq:disagreement} when required
\STATE For every $(X_i,Y_i)\in\mathcal D_{\mathrm{cal}}$, compute $e_i=\max\{\ell(X_i)-Y_i,\;Y_i-u(X_i)\}$
\IF{mode $=$ scaled}
    \STATE Set $a_\gamma(v)=\sqrt{1+\gamma d(v)^2}$ and $s_i=\max\{e_i,0\}/a_\gamma(X_i)$
    \STATE Set $m=\lceil(n+1)(1-\alpha)\rceil$ and $\hat q$ to the $m$th smallest $s_i$, or to $+\infty$ if $m>n$
    \STATE \textbf{return} $C(x)=[\ell(x)-\hat q\,a_\gamma(x),\;u(x)+\hat q\,a_\gamma(x)]$
\ELSE
    \STATE For the signed variant, set $t_i=e_i$ and $R(v,y)=\max\{\ell(v)-y,\;y-u(v)\}$
    \STATE For the scaled variant, set $a_\gamma(v)=\sqrt{1+\gamma d(v)^2}$, $t_i=s_i=\max\{e_i,0\}/a_\gamma(X_i)$, and $R(v,y)=\max\{\ell(v)-y,\;y-u(v),0\}/a_\gamma(v)$
    \FOR{each pre-specified stratum $h\in\mathcal H$}
        \STATE Set $I_h=\{i:H(X_i)=h\}$, $n_h=|I_h|$, and $m_h=\lceil(n_h+1)(1-\alpha)\rceil$
        \STATE Let $\hat q_h$ be the $m_h$th smallest $t_i$ for $i\in I_h$, or $+\infty$ if $m_h>n_h$
    \ENDFOR
    \STATE Set $h=H(x)$; \textbf{return} $C(x)=\{y:R(x,y)\leq\hat q_h\}$
\ENDIF
\end{algorithmic}
\end{algorithm}
\endgroup
\FloatBarrier

\section{Related Work}
\label{sec:related}

Inductive and split conformal prediction provide finite-sample marginal coverage for a fixed rule \citep{vovk2005algorithmic,papadopoulos2002inductive,shafer2008tutorial,lei2018distribution,angelopoulos2023gentle}, while conformalized quantile regression calibrates fitted quantile endpoints \citep{romano2019cqr}. Coverage conditional on the realized training sample has also been analyzed \citep{bian2023training}, and impossibility results delimit unrestricted distribution-free conditional validity \citep{barber2021limits}. Mondrian and group-conditional methods provide finite-stratum guarantees when the grouping rule is fixed before calibration \citep{vovk2012conditional,bostrom2020mondrian}. \citet{bellotti2021approximation} instead studies an empirical approximation to object-conditional validity.

Our layer applies these ideas to source disagreement and modality availability. Its continuous branch relates to normalized and localized conformal methods \citep{papadopoulos2008normalized,guan2023localized,colombo2023localcp}, but uses the dispersion of source predictions rather than a covariate distance, residual model, or estimated likelihood ratio \citep{tibshirani2019covariate}. Prior methods use estimates of instance difficulty in several ways. \citet{bostrom2020mondrian} form Mondrian categories by grouping calibration instances according to estimated difficulty, with approximately the same number in each category. \citet{johansson2024reject} evaluate the variance of individual random-forest tree predictions as a difficulty estimate in a conformal framework with a reject option. Our signal instead measures variation across target predictions from models fitted to different input blocks. A block may represent an entire input modality, such as tabular, text, or image features, or a distinct field within one modality, such as a title or description. The corresponding block-specific models may be components of the calibrated predictor or auxiliary models fitted to construct the disagreement score.

Multi-modal learning fuses tabular, text, image, and other input channels \citep{baltrusaitis2019survey,gao2020survey}. Aggregate disagreement between separately trained unimodal classifiers has been used to quantify multi-modal interaction in classification \citep{liang2024multimodal}. Our score is instead a per-example regression calibration covariate. Missing-modality methods address sources absent or corrupted at inference time \citep{wu2026missingmodalitysurvey}. Representative approaches use generative latent variables, stochastic fusion, training under severe missingness, or representations that separate shared and specific features \citep{wu2018mvae,choi2019embracenet,ma2021smil,wang2023shaspec}. Closely related multi-modal work by \citet{bose2024multimodalconformal} constructs conformal intervals from internal neural features of models with image or text inputs. We instead expose source-wise predictions, allowing the same disagreement score to work with tree models or MC samplers and making availability regimes explicit.

For conformal regression with missing covariates, \citet{zaffran2023missing} show that coverage can fail conditionally on the missingness pattern and propose missing data augmentation. Recent conformal methods that condition on the missingness mask address general missingness mechanisms through distributional imputation and weighted or acceptance-rejection corrections \citep{fan2026maskconditional}. Our construction instead targets a small, pre-specified set of masks that remove whole modalities and computes direct quantiles within each stratum. The reported stress tests with fixed masks use its mask-matched special case with a constant stratum label.

Conformal predictive distributions and high-dimensional conformal regions extend conformal inference beyond the scalar prediction intervals considered in this work \citep{vovk2017cpd,vovk2020efficientcps,bostrom2021mondrianpd,izbicki2022hpd}. Deep ensembles and normalizing flows are alternative models of predictive uncertainty \citep{lakshminarayanan2017deep,papamakarios2021flows}, while pair-copula constructions are flexible components for modeling multivariate dependence \citep{aas2009paircopula}. CLEAR jointly tunes a global scale and the relative weight of separately estimated aleatoric and epistemic components \citep{azizi2026clear}. The two-parameter scale family of \appref{app:repro} plays the analogous role in our continuous branch, with observable disagreement among per-source predictors in place of an estimated epistemic component. The Supervised Expectation-Maximization Framework (SEMF) offers another model-agnostic approach to prediction intervals through latent-variable modeling \citep{azizi2025semf}. Its representations for separate input groups make it relevant to multi-modal inputs and missing data, although the original study presents those settings as future work rather than evaluating them. In the SRED diagnostics, we treat ensembles of SEMF decoder outputs as fixed base predictions and calibrate them conformally (\appref{app:semf}). Turning these models into distribution-free intervals still requires an appropriate conformal score.

\section{Experiments}
\label{sec:exp}

\subsection{Experimental Setup}
\label{sec:setup}

The experiments use four multi-modal regression datasets. The main case study is SRED, with 11,105 Swiss apartment rental listings \citep{azizi2022sred}. Its target is the natural logarithm of rent, $\log(\mathrm{rent})$, and its representation combines tabular variables, text embeddings, and image embeddings. The benchmark further includes Mercari \citep{mercari2018kaggle}, Pawpularity \citep{pawpularity2021kaggle}, and IMDB-WIKI \citep{rothe2018deep}. \autoref{tab:datasets} in \appref{app:repro} gives the target variables and the disjoint fit, tune, calibration, and test split sizes used in the cross-dataset benchmark, and overlap checks between the test and training splits are reported in the same appendix.

The experiments operate on fixed representations rather than training on raw pixels or raw text. The cross-dataset runs use frozen pretrained embeddings directly, with multiple fields mean-pooled within a modality. A separate SRED diagnostic additionally evaluates the wrapper per base predictor on a precomputed 36-dimensional representation (\appref{app:repro}, \appref{app:additional}).

\paragraph{Metrics and calibration protocol.}
All experiments on the four datasets use five seeds (listed in \appref{app:repro}) and target nominal coverage $1-\alpha=0.95$. Values written as $a\pm b$ and figure error bars report the mean and one sample standard deviation as descriptive variation across runs, not as standard errors, confidence intervals, or inferential uncertainty statements. The cross-dataset benchmarks use disjoint fit, tune, calibration, and test splits. For the disagreement-Mondrian experiments, we use three coarse strata defined on the tuning split by the empirical $1/3$ and $2/3$ disagreement quantiles. This balances resolution and calibration size, but is not required by Proposition~\ref{prop:mondrian}. The edges are fixed before calibration. The continuous disagreement scale is likewise selected from a finite grid on the tuning split. \appref{app:repro} gives the precise convention for empirical quantiles, grids, and objective. All final $\hat q$ values for the disagreement-scaled runs are non-negative, and endpoint metrics are computed only after verifying $L_j\leq U_j$.

We report empirical coverage (PICP), mean prediction interval width (MPIW), normalized calibrated interval width (NCIW) \citep{azizi2026clear}, and interval continuous ranked probability score (CRPS), following the calibration--sharpness principle of \citet{gneiting2007calibration}. Interval CRPS is the CRPS of the uniform distribution over the reported interval. \appref{app:metrics} gives the closed-form definitions.

The auxiliary SRED per-predictor diagnostic (\appref{app:additional}) compares three base predictors, an XGBoost point model \citep{chen2016xgboost}, SEMF-derived MC ensembles of decoder outputs, and an XGBoost quantile model. \appref{app:repro} and \appref{app:semf} give the decoder variants, splits, and details of the precomputed representation. The principal calibration variants are marginal split conformal/CQR, disagreement-Mondrian split conformal/CQR, and disagreement-scaled split conformal/CQR, according to whether the base predictor supplies a point or quantile interval.

A monotone piecewise-constant disagreement scale selected on the tuning split is also evaluated as a secondary binned variant. Because it improves on the continuous scale in only 3 of the 60 paired CRPS comparisons, only the continuous scale is reported in the table, and the binned variant's paired counts appear in \appref{app:repro}. In the cross-dataset benchmark, a separate sensitivity comparator fits a shallow XGBoost model to the fused covariates of the tuning split to predict the log of the non-negative residual or CQR score. Its exponentiated, median-normalized prediction defines a positive normalizer that is frozen before calibration. The marginal comparator uses absolute residuals for point predictors and signed CQR scores for endpoints from quantile models. Every final signed marginal quantile in the paired cross-dataset and SRED diagnostic runs is non-negative, so these endpoints coincide with the clipped construction at $\gamma=0$ and whenever the selected two-parameter scale of \appref{app:repro} has $a_1=0$, equivalently $\gamma=0$ in \autoref{eq:scale}.

The form of \autoref{eq:disagreement} with varying availability beyond a fixed source set and the scaled Mondrian variant of \autoref{sec:mondrian} are stated as definitions only and are not exercised by these experiments. Formal guarantees require model fitting, tuning, and calibration to be separated. The SRED per-predictor rows are therefore empirical diagnostics because their calibration split was not kept untouched throughout SEMF fitting and tuning.

\subsection{Disagreement-scaled Calibration Across Datasets and Predictors}
\label{sec:main-results}

The headline cross-dataset benchmark applies the wrapper to three base predictors per dataset. The first is an XGBoost point predictor trained on the fitting split with early stopping monitored on the tuning split. The second is an XGBoost quantile model trained on the fitting split, whose fitted endpoints are calibrated with CQR scores. The third is a source-wise quantile ensemble that combines per-source quantile predictors trained on the fitting split, using fixed convex weights inversely proportional to the RMSE of their median predictions on the tuning split. A fourth base predictor, a ridge stacker of per-source point predictions fitted on the tuning split, provides a robustness check (\appref{app:repro}). All such choices are fixed before calibration.

\begin{table}[!htbp]
\centering
\small
\setlength{\tabcolsep}{3pt}
\caption{Marginal and disagreement-scaled calibration across four multi-modal datasets. Each dataset aggregates XGBoost point, XGBoost quantile, and source-wise quantile predictors over five seeds (15 paired predictor--seed runs); metric entries are the mean $\pm$ one sample standard deviation. Metric values are not highlighted by rank: PICP should be near $0.95$, whereas lower is better for MPIW, NCIW, and CRPS. The columns on the right report W/T/L against marginal calibration for the same predictor--seed run. W is a strict decrease, T an exact tie, and L an increase at full precision; every triplet sums to 15. The bold W/T counts count the runs that match or improve, as summarized in the text, while ties remain distinct from wins. MPIW and CRPS have dataset-specific target units, and NCIW is dimensionless.}
\label{tab:cross-predagn}
\begin{tabular*}{\linewidth}{@{\extracolsep{\fill}}llcccc}
\toprule
\addlinespace[3pt]
\multicolumn{6}{@{}l}{\normalsize\itshape Panel A: Coverage and raw interval width} \\[1pt]
Dataset & Calibration & PICP & MPIW & \multicolumn{2}{c}{MPIW} \\
 & & & & \multicolumn{2}{c}{W / T / L} \\
\midrule
SRED & Marginal & $0.943\pm0.005$ & $0.714\pm0.136$ & \multicolumn{2}{c}{\textit{Reference}} \\
 & Disagreement-scaled & $0.942\pm0.006$ & $0.689\pm0.128$ & \multicolumn{2}{c}{$\mathbf{15/0}/0$} \\
\cmidrule{1-6}
Mercari & Marginal & $0.950\pm0.002$ & $2.439\pm0.083$ & \multicolumn{2}{c}{\textit{Reference}} \\
 & Disagreement-scaled & $0.950\pm0.002$ & $2.439\pm0.084$ & \multicolumn{2}{c}{$\mathbf{6/3}/6$} \\
\cmidrule{1-6}
Pawpularity & Marginal & $0.953\pm0.005$ & $84.426\pm4.608$ & \multicolumn{2}{c}{\textit{Reference}} \\
 & Disagreement-scaled & $0.952\pm0.006$ & $83.020\pm5.599$ & \multicolumn{2}{c}{$\mathbf{9/5}/1$} \\
\cmidrule{1-6}
IMDB-WIKI & Marginal & $0.952\pm0.003$ & $40.616\pm4.799$ & \multicolumn{2}{c}{\textit{Reference}} \\
 & Disagreement-scaled & $0.951\pm0.003$ & $40.464\pm4.650$ & \multicolumn{2}{c}{$\mathbf{9/5}/1$} \\
\midrule
\addlinespace[3pt]
\multicolumn{6}{@{}l}{\normalsize\itshape Panel B: Normalized interval width and CRPS} \\[1pt]
Dataset & Calibration & NCIW & CRPS & NCIW & CRPS \\
 & & & & W / T / L & W / T / L \\
\midrule
SRED & Marginal & $0.349\pm0.072$ & $0.103\pm0.020$ & \multicolumn{2}{c}{\textit{Reference}} \\
 & Disagreement-scaled & $0.341\pm0.069$ & $0.102\pm0.020$ & $\mathbf{12/0}/3$ & $\mathbf{15/0}/0$ \\
\cmidrule{1-6}
Mercari & Marginal & $0.326\pm0.011$ & $0.373\pm0.025$ & \multicolumn{2}{c}{\textit{Reference}} \\
 & Disagreement-scaled & $0.326\pm0.011$ & $0.373\pm0.025$ & $\mathbf{9/3}/3$ & $\mathbf{12/3}/0$ \\
\cmidrule{1-6}
Pawpularity & Marginal & $0.858\pm0.048$ & $12.791\pm2.123$ & \multicolumn{2}{c}{\textit{Reference}} \\
 & Disagreement-scaled & $0.847\pm0.054$ & $12.717\pm2.188$ & $\mathbf{8/5}/2$ & $\mathbf{9/5}/1$ \\
\cmidrule{1-6}
IMDB-WIKI & Marginal & $0.447\pm0.052$ & $6.489\pm1.294$ & \multicolumn{2}{c}{\textit{Reference}} \\
 & Disagreement-scaled & $0.446\pm0.051$ & $6.478\pm1.286$ & $\mathbf{7/5}/3$ & $\mathbf{10/5}/0$ \\
\bottomrule
\end{tabular*}

\end{table}

\autoref{tab:cross-predagn} isolates the conformal wrapper from the base predictors by pairing calibrations after each predictor is fixed. Across four datasets, three base predictors per dataset, and five seeds, disagreement scaling yields 39 strict width improvements, 13 exact ties, and 8 losses; for CRPS it yields 46 strict improvements, 13 exact ties, and 1 loss. It therefore matches or improves marginal calibration in 52 of 60 width comparisons and 59 of 60 CRPS comparisons. NCIW has 36 strict improvements, 13 ties, and 11 losses, or 49 of 60 non-inferior comparisons. Overall mean PICP changes only from $0.94943$ to $0.94899$. Adding the ridge stacker gives the same qualitative robustness pattern: 49/19/12 strict improvements/ties/losses for width and 56/19/5 for CRPS, hence 68 of 80 and 75 of 80 non-inferior comparisons. These are descriptive paired summaries, not a pooled significance test: predictors within each dataset--seed cell share splits and source predictions, and the five seeds are not independent dataset draws.

Diagnostics by disagreement bin explain the aggregate mechanism (\autoref{fig:mechanism}). Marginal coverage in the high bin averages $0.923$, compared with $0.937$ under disagreement scaling; the latter retains $0.954$ and $0.955$ coverage in the low and middle bins. Disagreement-Mondrian calibration reaches $0.948$ in the high bin when stratum-wise coverage is the priority, at the price of wider high-disagreement intervals. \autoref{fig:examples} in \appref{app:worked} shows both regimes in individual listings.

\begin{figure}[!htbp]
  \centering
  \includegraphics[width=0.98\linewidth]{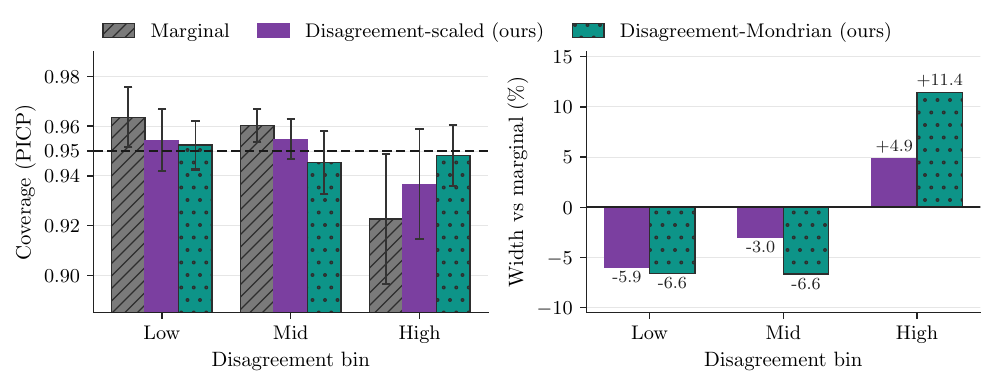}
  \caption{\textbf{The reallocation mechanism.} Diagnostics by disagreement bin across four datasets, three base predictors per dataset, and five seeds (60 matched predictor--dataset--seed runs). Whiskers show $\pm1$ sample standard deviation as descriptive variation across runs on a zoomed coverage axis, not standard errors or confidence intervals. \emph{Left.} Marginal calibration over-covers the low- and mid-disagreement bins and under-covers the high bin; disagreement scaling and disagreement-Mondrian calibration move coverage toward $0.95$. \emph{Right.} Both narrow easy bins and widen the high-disagreement bin, with greater widening under the stratum guarantee.}
  \label{fig:mechanism}
\end{figure}

As an additional sensitivity check, \autoref{tab:cross-predagn-sensitivity} (\appref{app:xgb-scale}) reports normalized split conformal/CQR using an XGBoost-learned scale based on fused features. For width, it yields 9 strict improvements, 4 exact ties, and 47 losses; for CRPS, the corresponding counts are 10, 4, and 46. This particular learned normalizer is unstable, especially for CQR on Pawpularity, and does not characterize learned normalization methods more generally. The SRED per-predictor diagnostic in \appref{app:additional} shows the same qualitative pattern as the headline disagreement analysis. Disagreement-scaled split conformal/CQR attains the lowest mean NCIW and mean raw width for all three base predictors in that diagnostic, while disagreement-Mondrian provides the explicitly stratified validity mechanism. The key claim has two parts. Disagreement scaling is the empirical efficiency variant, while Mondrian calibration is the finite-stratum conditional repair when strata are fixed before calibration.

\subsection{Missing-modality Regimes}
\label{sec:regimes}

The strongest multi-modal failure mode appears when a source is absent at test time. The experiment in \autoref{tab:missing-cross} trains the full-modality XGBoost point predictor of \autoref{sec:main-results} and compares three calibration choices. These are a single full-regime residual quantile, one pooled quantile over all represented masked calibration examples, and mask-matched quantiles computed after applying the same missing-modality mask to the calibration split. For each mask, the corresponding modality block of frozen embeddings is set to zero in calibration and test examples. The tabular block is never masked, and no missingness indicator is added. The pooled comparator concatenates, with equal weight, the absolute-residual scores recomputed on the same calibration examples under each distinct non-full mask, excluding the unmasked regime, so every mask contributes one copy of the $n$ calibration scores and the pooled size is $N_{\mathrm{pool}}=3n$ for SRED and IMDB-WIKI and $N_{\mathrm{pool}}=n$ for Mercari and Pawpularity. Its threshold is the $\lceil(N_{\mathrm{pool}}+1)(1-\alpha)\rceil$th smallest pooled score, the same finite-sample order-statistic convention used for every split-conformal quantile in the paper.

Each table row is a separate experiment with a deterministic mask applied to every calibration and test example, not a partition of one sample with mixed missingness. Thus $H$ is constant within a row and $n_h=n$: mask-matched recalibration is the special case of Proposition~\ref{prop:mondrian} with a constant stratum label, equivalent to ordinary split conformal on the masked distribution rather than a nontrivial Mondrian partition. It uses no test labels. Deterministic masking preserves exchangeability when applied identically to calibration and test examples, but a changed missingness mechanism requires fresh representative calibration data.

\begin{table}[!htbp]
\centering
\small
\setlength{\tabcolsep}{2.4pt}
\caption{Cross-dataset stress test with missing modalities (five-seed means). Full uses the full-regime quantile, pooled heuristically combines all represented masked calibration scores, and mask-matched recalibration uses calibration examples transformed by the row's fixed mask. Gain is the coverage change in percentage points, and MPIW change is the relative width change from full to mask-matched calibration. Both average unrounded per-seed changes and can differ by $0.1$ from differences recomputed from the rounded columns. For the two $K=2$ datasets, Mercari and Pawpularity, pooled and mask-matched calibration coincide because only one non-full mask exists.}
\label{tab:missing-cross}
\begin{tabular*}{\linewidth}{@{\extracolsep{\fill}}llccccc}
\toprule
Dataset & Masked regime & Full PICP & Pooled PICP & Mask PICP & Gain (pp) & MPIW change \\
\midrule
SRED & no text & $0.907$ & $0.966$ & $0.942$ & $+3.4$ & $+21.1\%$ \\
SRED & no image & $0.866$ & $0.953$ & $0.942$ & $+7.6$ & $+40.7\%$ \\
SRED & tabular only & $0.790$ & $0.918$ & $0.946$ & $+15.6$ & $+71.2\%$ \\
Mercari & tabular only & $0.930$ & $0.950$ & $0.950$ & $+2.1$ & $+12.5\%$ \\
Pawpularity & tabular only & $0.938$ & $0.951$ & $0.951$ & $+1.4$ & $+13.5\%$ \\
IMDB-WIKI & no text & $0.950$ & $0.998$ & $0.950$ & $+0.0$ & $+0.6\%$ \\
IMDB-WIKI & no image & $0.759$ & $0.934$ & $0.953$ & $+19.4$ & $+123.2\%$ \\
IMDB-WIKI & tabular only & $0.758$ & $0.933$ & $0.953$ & $+19.5$ & $+125.0\%$ \\
\bottomrule
\end{tabular*}

\end{table}

\autoref{tab:missing-cross} shows the width cost of repairing coverage after a modality is removed. The full-regime quantile under-covers hard masks, while pooled masked calibration can over-widen easy masks and still miss hard ones. The largest repair is the IMDB-WIKI tabular-only regime, where mask-matched recalibration raises PICP from $0.758$ to $0.953$, a $19.5$ percentage-point gain, while increasing MPIW by $125.0\%$. For SRED and IMDB-WIKI, pooling stacks dependent masked copies of the same calibration examples, so we claim no finite-sample split-conformal guarantee for that comparator. For Mercari and Pawpularity, $K=2$ leaves one non-full mask, so pooled and mask-matched quantiles are identical ordinary split-conformal quantiles and inherit the same guarantee under exchangeability. On IMDB-WIKI no-text, pooling reaches $0.998$ coverage at twice the mask-matched width, whereas mask-matched recalibration leaves this nearly unaffected mask essentially unchanged. In this point-predictor experiment, both rules report a constant symmetric radius around the same point prediction within a fixed mask, and the mask-matched width increase reflects the larger residuals needed to recover nominal coverage. \autoref{fig:missing-dumbbell} displays the repair across all eight masks.

The same mask-matched repair appears for the SEMF-derived SRED predictor with the original decoder under simulated missingness (\autoref{fig:regime} and \autoref{tab:regime} in \appref{app:additional}). To avoid reusing the SEMF validation split, these rows divide the held-out test predictions into calibration and evaluation halves. For no-image examples, mask-matched calibration raises coverage from $0.923$ to $0.946$ while MPIW grows from $3.050$ to $3.393$. For tabular-only examples, coverage rises from $0.921$ to $0.947$ while MPIW grows from $3.045$ to $3.472$. The remaining deviations are consistent with finite held-out evaluation samples.

\subsection{Additional Benchmarks and Diagnostics}
\label{sec:aux}

Two further experiments appear in \appref{app:additional}, an auxiliary benchmark of multi-modal point prediction (\appref{app:aux}) and a stratification of the SEMF-derived predictor by its predicted uncertainty (\appref{app:uncert}). A separate simulation in \appref{app:smallsample} quantifies the labeled budget at which tuning the disagreement scale begins to narrow intervals relative to simpler fixed choices. \autoref{tab:aux} compares the gated source-wise base model with homogeneous XGBoost and two neural fusion baselines. Among the multi-modal fusion models, a neural baseline leads in point accuracy on SRED, Mercari, and IMDB-WIKI, while homogeneous XGBoost leads on Pawpularity; the image-only solo predictor attains the highest Pawpularity value overall. The source-wise model still improves over homogeneous XGBoost on three datasets and yields marginal-CQR intervals near the nominal level. \autoref{fig:uncertainty} replaces $d(x)$ with the ensemble width of the augmented SEMF decoder outputs. Global coverage is $0.952$, $0.942$, and $0.946$ across the three bins, while MC-width Mondrian coverage is $0.945$, $0.941$, and $0.953$. The differences are small relative to the descriptive variation across the five seeds; this is not an inferential comparison and remains a weak diagnostic.

\begin{figure}[!htbp]
  \centering
  \includegraphics[width=\linewidth]{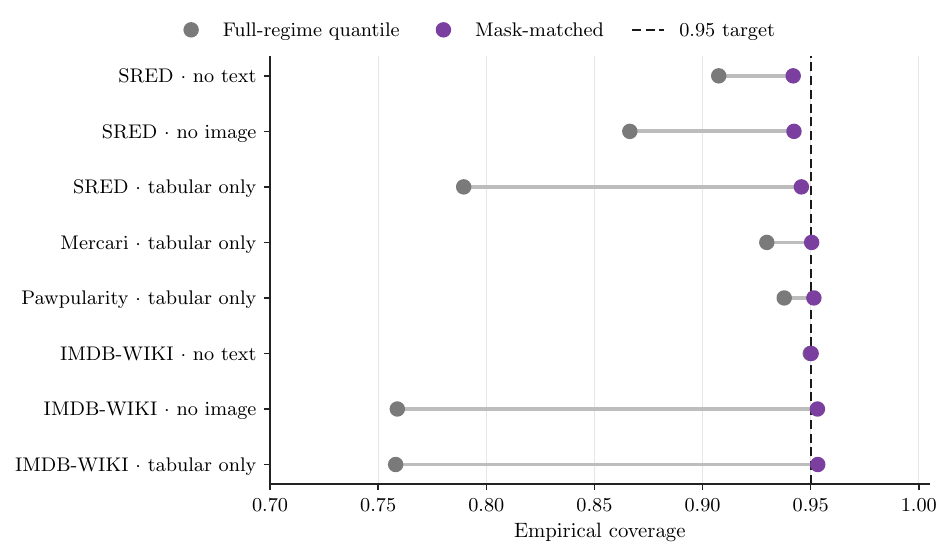}
  \caption{Missing-modality coverage repair across eight fixed-mask point-predictor experiments. Each row compares the same predictor under the full-regime quantile and mask-matched recalibration, the special case of Proposition~\ref{prop:mondrian} with a constant stratum label. Both rules share the same point center and constant radius within each mask, and recalibration restores coverage in harder masks by increasing that radius. Dots are five-seed mean coverages.}
  \label{fig:missing-dumbbell}
\end{figure}
\FloatBarrier

\section{Discussion}
\label{sec:discussion}

The experiments support the calibration mechanism, not a particular fusion architecture. A global quantile can allocate width inefficiently across regimes of source conflict or become inappropriate after a change in the test-time modality pattern. Treating these as calibration variables makes the uncertainty layer inspectable through the disagreeing sources, the observed availability regime, and the calibration examples supporting an interval. The two tools are complementary. Disagreement scaling targets efficiency under marginal coverage and uses a non-negative score to preserve the intended uncertainty ordering. It narrows low-disagreement cases and widens conflicted ones in most paired runs while leaving empirical PICP near nominal. Even on Mercari, where aggregate width is nearly unchanged, 12 of 15 runs narrow the intervals in the low-disagreement bin and widen them in the high-disagreement bin, and coverage in the high bin rises from $0.933$ to $0.940$.

Mondrian calibration is less smooth and can be less sample-efficient, but it gives an interpretable finite-stratum guarantee when the regime map is fixed and calibration and test examples are jointly exchangeable conditional on $\mathcal D_{\mathrm{pre}}$. The experiments with disagreement terciles apply it to a nontrivial partition. The fixed-mask stress tests instead use mask-matched recalibration, its special case with a constant stratum label, so larger residuals from the masked distribution produce appropriately larger intervals without claiming a partition over mixed regimes. The same modular pattern runs through all of these experiments. Source-wise predictors create a disagreement signal, and split-conformal calibration turns that signal into a scale that targets efficiency or into a stratum quantile that protects groups. The construction applies to any fixed base predictor, and the reported evidence covers gradient-boosting point, quantile, and stacked predictors plus SEMF-derived MC ensembles of decoder outputs.

The strict split protocol separates valid conformal calibration from post-hoc analysis. Pre-processing, encoders, early stopping, bin edges, missingness regimes, and scale selection all precede calibration. The cross-dataset runs follow this protocol, while the SRED missing-modality diagnostic reserves an evaluation half of the held-out predictions. This reduces sample efficiency but makes the empirical claims match the propositions more closely. For deployment, disagreement scaling is the lighter tool when average efficiency is primary. With roughly a hundred or fewer held-out labels, the simulation in \appref{app:smallsample} favors skipping the tuning split and either calibrating the unscaled score or fixing $\gamma=1$ in advance. Fixing $\gamma=1$ is not free, as it widens intervals by about a fifth relative to the marginal rule when disagreement carries no signal. Tuning the scale yields narrower intervals than the better of these fixed choices only from budgets of $200$ to $400$ labels onward, and only when neither is already close to the most efficient scale in the family. Mondrian calibration is preferable when coverage for a named modality pattern matters and that regime is represented in calibration. Reporting both separates efficiency from protection of specific regimes.

\section{Conclusion}
\label{sec:conclusion}

Multi-modal regression systems can fail their uncertainty estimates in two distinct structured ways: a global conformal quantile can allocate width inefficiently across regimes of source disagreement, and calibration can fail after a shift to a different test-time modality pattern. This paper turns that structure into two calibration variables. Source-wise disagreement drives a continuous disagreement-scaled conformal score that reallocates interval width while preserving marginal split-conformal validity. Across 60 paired headline comparisons, scaling matches or improves interval CRPS in 59 and width in 52 while keeping empirical coverage near $0.95$. Modality availability drives a Mondrian construction that protects pre-specified groups under the stated exchangeability conditions. The fixed-mask experiments use its mask-matched special case with a constant stratum label. Under those fixed masks, full-regime coverage falls as low as $0.76$, and mask-matched recalibration recovers up to $19.5$ percentage points. Together the two mechanisms form a model-agnostic wrapper for marginal efficiency and the protection of fixed groups.

\section{Limitations and Future Work}
\label{sec:limitations}

The finite-sample guarantees require every pipeline choice to be fixed before calibration, from the base and source predictors and their pre-processing to the availability rule, the stratum construction, and the disagreement scale. Calibration and test examples must then be exchangeable under that fixed pipeline. The cross-dataset benchmark follows this fit/tune/calibration/test protocol. The masking experiments zero the masked modality's feature block in both calibration and evaluation data, so they do not establish performance under naturally occurring or informative missingness. Interval CRPS evaluates a uniform distribution over the final interval rather than a full predictive law.

The scalar disagreement score gives every available source equal weight. It can miss cross-modal dependence and differences in source reliability, and correlated or substitute predictors can agree while all are wrong. Its behavior also depends on the number and redundancy of the available sources, so the continuous experiments deliberately keep one source set fixed and use a separate regime rule for missing modalities.

Sample size limits how flexible either calibration rule can be. At $\alpha=0.05$, a finite split-conformal quantile requires at least $19$ calibration examples, and Mondrian calibration requires that many in every protected stratum. Fixing $\gamma=1$ on a pre-specified standardized disagreement scale removes the dedicated step of selecting the scale, though not the tuning required by the base predictor or other data-driven choices. The simulation in \appref{app:smallsample} quantifies the resulting trade-off at small calibration budgets. Richer monotone, spline, or GAM-type scales need sufficient independent tuning data unless fixed in advance.

Several directions for future work follow. Cross-fitting could recover part of the sample efficiency that the strict split protocol gives up. Weighting the disagreement by source reliability would relax the equal treatment of sources, and systematic studies of $K$ and of source redundancy would clarify how the score behaves as the set of sources changes. Evaluations under naturally occurring missingness and under corruption at deployment would probe the regime tools beyond the fixed masks studied here.

\noindent\textbf{Acknowledgments.}
We thank Juraj Bodik, whose thorough feedback shaped some of the analyses, and Marc-Olivier Boldi and Valérie Chavez-Demoulin, whose careful reading and constructive discussions improved the paper throughout. We also gratefully acknowledge the HEC Research Fund, whose support made the computations on the DCSR clusters of the University of Lausanne possible.

\noindent\textbf{Code and data availability.} \url{https://unco3892.github.io/modality-aware-conformal}.

\bibliography{bibliography}

@inproceedings{chen2016xgboost,
  title     = {{XGBoost}: A Scalable Tree Boosting System},
  author    = {Chen, Tianqi and Guestrin, Carlos},
  booktitle = {Proceedings of the 22nd {ACM} {SIGKDD} International Conference on
               Knowledge Discovery and Data Mining},
  pages     = {785--794},
  year      = {2016},
  doi       = {10.1145/2939672.2939785}
}

@article{gao2020survey,
  title     = {A Survey on Deep Learning for Multimodal Data Fusion},
  author    = {Gao, Jing and Li, Peng and Chen, Zhikui and Zhang, Jianing},
  journal   = {Neural Computation},
  volume    = {32},
  number    = {5},
  pages     = {829--864},
  year      = {2020},
  publisher = {MIT Press},
  doi       = {10.1162/neco_a_01273}
}

@inproceedings{azizi2025semf,
  title={{SEMF}: Supervised Expectation-Maximization Framework for Predicting Intervals},
  author={Azizi, Ilia and Boldi, Marc-Olivier and Chavez-Demoulin, Val{\'e}rie},
  booktitle={Proceedings of the Fourteenth Symposium on Conformal and Probabilistic Prediction with Applications},
  series={Proceedings of Machine Learning Research},
  volume={266},
  pages={250--281},
  year={2025},
  publisher={PMLR},
  url={https://proceedings.mlr.press/v266/azizi25a.html}
}

@article{azizi2022sred,
  title={Improving Real Estate Rental Estimations with Visual Data},
  author={Azizi, Ilia and Rudnytskyi, Iegor},
  journal={Big Data and Cognitive Computing},
  volume={6},
  number={3},
  pages={96},
  year={2022},
  doi={10.3390/bdcc6030096}
}

@book{vovk2005algorithmic,
  title={Algorithmic Learning in a Random World},
  author={Vovk, Vladimir and Gammerman, Alexander and Shafer, Glenn},
  publisher={Springer},
  year={2005}
}

@article{shafer2008tutorial,
  title={A Tutorial on Conformal Prediction},
  author={Shafer, Glenn and Vovk, Vladimir},
  journal={Journal of Machine Learning Research},
  volume={9},
  pages={371--421},
  year={2008}
}

@inproceedings{papadopoulos2002inductive,
  title={Inductive Confidence Machines for Regression},
  author={Papadopoulos, Harris and Proedrou, Kostas and Vovk, Volodya and Gammerman, Alex},
  booktitle={Machine Learning: ECML 2002},
  series={Lecture Notes in Computer Science},
  volume={2430},
  pages={345--356},
  year={2002},
  publisher={Springer},
  doi={10.1007/3-540-36755-1_29}
}

@article{angelopoulos2023gentle,
  title={Conformal Prediction: A Gentle Introduction},
  author={Angelopoulos, Anastasios N. and Bates, Stephen},
  journal={Foundations and Trends in Machine Learning},
  volume={16},
  number={4},
  pages={494--591},
  year={2023},
  doi={10.1561/2200000101}
}

@article{lei2018distribution,
  title={Distribution-Free Predictive Inference for Regression},
  author={Lei, Jing and G'Sell, Max and Rinaldo, Alessandro and Tibshirani, Ryan J. and Wasserman, Larry},
  journal={Journal of the American Statistical Association},
  volume={113},
  number={523},
  pages={1094--1111},
  year={2018},
  doi={10.1080/01621459.2017.1307116}
}

@inproceedings{romano2019cqr,
  title={Conformalized Quantile Regression},
  author={Romano, Yaniv and Patterson, Evan and Cand{\`e}s, Emmanuel J.},
  booktitle={Advances in Neural Information Processing Systems},
  volume={32},
  pages={3538--3548},
  year={2019},
  url={https://papers.neurips.cc/paper/8613-conformalized-quantile-regression}
}

@inproceedings{vovk2012conditional,
  title={Conditional Validity of Inductive Conformal Predictors},
  author={Vovk, Vladimir},
  booktitle={Proceedings of the Asian Conference on Machine Learning},
  series={Proceedings of Machine Learning Research},
  volume={25},
  pages={475--490},
  year={2012},
  publisher={PMLR},
  url={https://proceedings.mlr.press/v25/vovk12.html}
}

@article{barber2021limits,
  title={The Limits of Distribution-Free Conditional Predictive Inference},
  author={Barber, Rina Foygel and Cand{\`e}s, Emmanuel J. and Ramdas, Aaditya and Tibshirani, Ryan J.},
  journal={Information and Inference: A Journal of the IMA},
  volume={10},
  number={2},
  pages={455--482},
  year={2021},
  doi={10.1093/imaiai/iaaa017}
}

@inproceedings{bellotti2021approximation,
  title={Approximation to Object Conditional Validity with Inductive Conformal Predictors},
  author={Bellotti, Anthony},
  booktitle={Proceedings of the Tenth Symposium on Conformal and Probabilistic Prediction and Applications},
  series={Proceedings of Machine Learning Research},
  volume={152},
  pages={4--23},
  year={2021},
  publisher={PMLR},
  url={https://proceedings.mlr.press/v152/bellotti21a.html}
}

@inproceedings{bostrom2020mondrian,
  title={Mondrian Conformal Regressors},
  author={Bostr{\"o}m, Henrik and Johansson, Ulf},
  booktitle={Proceedings of the Ninth Symposium on Conformal and Probabilistic Prediction and Applications},
  series={Proceedings of Machine Learning Research},
  volume={128},
  pages={114--133},
  year={2020},
  publisher={PMLR},
  url={https://proceedings.mlr.press/v128/bostrom20a.html}
}

@inproceedings{zaffran2023missing,
  title={Conformal Prediction with Missing Values},
  author={Zaffran, Margaux and Dieuleveut, Aymeric and Josse, Julie and Romano, Yaniv},
  booktitle={Proceedings of the 40th International Conference on Machine Learning},
  series={Proceedings of Machine Learning Research},
  volume={202},
  pages={40578--40604},
  year={2023},
  publisher={PMLR},
  url={https://proceedings.mlr.press/v202/zaffran23a.html}
}

@inproceedings{fan2026maskconditional,
  title={Mask-Conditional Conformal Prediction: Valid Uncertainty For All Missing Data Mechanisms},
  author={Fan, Jiarong and Park, Juhyun and Vo, Thi Phuong Thuy and Brunel, Nicolas J.-B.},
  booktitle={The 29th International Conference on Artificial Intelligence and Statistics},
  year={2026},
  url={https://openreview.net/forum?id=FiQAQSTXZn}

}

@inproceedings{johansson2024reject,
  title={Conformal Regression with Reject Option},
  author={Johansson, Ulf and S{\"o}nstr{\"o}d, Cecilia and Bostr{\"o}m, Henrik},
  booktitle={Proceedings of the Thirteenth Symposium on Conformal and Probabilistic Prediction with Applications},
  series={Proceedings of Machine Learning Research},
  volume={230},
  pages={277--294},
  year={2024},
  publisher={PMLR},
  url={https://proceedings.mlr.press/v230/johansson24a.html}
}

@inproceedings{tibshirani2019covariate,
  title={Conformal Prediction Under Covariate Shift},
  author={Tibshirani, Ryan J. and Barber, Rina Foygel and Cand{\`e}s, Emmanuel J. and Ramdas, Aaditya},
  booktitle={Advances in Neural Information Processing Systems},
  volume={32},
  year={2019},
  url={http://papers.neurips.cc/paper/8522-conformal-prediction-under-covariate-shift}
}

@inproceedings{papadopoulos2008normalized,
  title={Normalized Nonconformity Measures for Regression Conformal Prediction},
  author={Papadopoulos, Harris and Gammerman, Alex and Vovk, Vladimir},
  booktitle={Proceedings of the {IASTED} International Conference on Artificial Intelligence and Applications},
  pages={64--69},
  year={2008},
  publisher={ACTA Press}
}

@article{guan2023localized,
  title={Localized Conformal Prediction: A Generalized Inference Framework for Conformal Prediction},
  author={Guan, Leying},
  journal={Biometrika},
  volume={110},
  number={1},
  pages={33--50},
  year={2023},
  doi={10.1093/biomet/asac040}
}

@inproceedings{colombo2023localcp,
  title={On Training Locally Adaptive {CP}},
  author={Colombo, Nicolo},
  booktitle={Proceedings of the Twelfth Symposium on Conformal and Probabilistic Prediction with Applications},
  series={Proceedings of Machine Learning Research},
  volume={204},
  pages={384--398},
  year={2023},
  publisher={PMLR},
  url={https://proceedings.mlr.press/v204/colombo23a.html}
}

@article{izbicki2022hpd,
  title={{CD}-split and {HPD}-split: Efficient Conformal Regions in High Dimensions},
  author={Izbicki, Rafael and Shimizu, Gilson and Stern, Rafael Bassi},
  journal={Journal of Machine Learning Research},
  volume={23},
  number={87},
  pages={1--32},
  year={2022},
  url={https://jmlr.org/papers/v23/20-797.html}
}

@article{bian2023training,
  title={Training-Conditional Coverage for Distribution-Free Predictive Inference},
  author={Bian, Michael and Barber, Rina Foygel},
  journal={Electronic Journal of Statistics},
  volume={17},
  number={2},
  pages={2044--2066},
  year={2023},
  doi={10.1214/23-EJS2145}
}

@inproceedings{vovk2017cpd,
  title={Nonparametric Predictive Distributions Based on Conformal Prediction},
  author={Vovk, Vladimir and Shen, Jieli and Manokhin, Valery and Xie, Min-ge},
  booktitle={Proceedings of the Sixth Workshop on Conformal and Probabilistic Prediction and Applications},
  series={Proceedings of Machine Learning Research},
  volume={60},
  pages={82--102},
  year={2017},
  publisher={PMLR},
  url={https://proceedings.mlr.press/v60/vovk17a.html}
}

@article{vovk2020efficientcps,
  title={Computationally Efficient Versions of Conformal Predictive Distributions},
  author={Vovk, Vladimir and Petej, Ivan and Nouretdinov, Ilia and Manokhin, Valery and Gammerman, Alex},
  journal={Neurocomputing},
  volume={397},
  pages={292--308},
  year={2020},
  doi={10.1016/j.neucom.2019.10.110}
}

@inproceedings{bostrom2021mondrianpd,
  title={Mondrian Conformal Predictive Distributions},
  author={Bostr{\"o}m, Henrik and Johansson, Ulf and L{\"o}fstr{\"o}m, Tuwe},
  booktitle={Proceedings of the Tenth Symposium on Conformal and Probabilistic Prediction and Applications},
  series={Proceedings of Machine Learning Research},
  volume={152},
  pages={24--38},
  year={2021},
  publisher={PMLR},
  url={https://proceedings.mlr.press/v152/bostrom21a.html}
}

@inproceedings{lakshminarayanan2017deep,
  title={Simple and Scalable Predictive Uncertainty Estimation Using Deep Ensembles},
  author={Lakshminarayanan, Balaji and Pritzel, Alexander and Blundell, Charles},
  booktitle={Advances in Neural Information Processing Systems},
  volume={30},
  year={2017},
  url={https://papers.neurips.cc/paper/7219-simple-and-scalable-predictive-uncertainty-estimation-using-deep-ensembles}
}

@article{gneiting2007calibration,
  title={Probabilistic Forecasts, Calibration and Sharpness},
  author={Gneiting, Tilmann and Balabdaoui, Fadoua and Raftery, Adrian E.},
  journal={Journal of the Royal Statistical Society: Series B (Statistical Methodology)},
  volume={69},
  number={2},
  pages={243--268},
  year={2007},
  doi={10.1111/j.1467-9868.2007.00587.x}
}

@article{baltrusaitis2019survey,
  title={Multimodal Machine Learning: A Survey and Taxonomy},
  author={Baltru{\v{s}}aitis, Tadas and Ahuja, Chaitanya and Morency, Louis-Philippe},
  journal={IEEE Transactions on Pattern Analysis and Machine Intelligence},
  volume={41},
  number={2},
  pages={423--443},
  year={2019},
  doi={10.1109/TPAMI.2018.2798607}
}

@inproceedings{liang2024multimodal,
  title={Multimodal learning without labeled multimodal data: Guarantees and applications},
  author={Liang, Paul Pu and Ling, Chun Kai and Cheng, Yun and Obolenskiy, Alexander and Liu, Yudong and Pandey, Rohan and Wilf, Alex and Morency, Louis-Philippe and Salakhutdinov, Russ},
  booktitle={International Conference on Learning Representations},
  year={2024},
  url={https://openreview.net/forum?id=BrjLHbqiYs}
}

@article{wu2026missingmodalitysurvey,
  title={Deep Multimodal Learning with Missing Modality: A Survey},
  author={Wu, Renjie and Wang, Hu and Chen, Hsiang-Ting and Carneiro, Gustavo},
  journal={Transactions on Machine Learning Research},
  year={2026},
  url={https://openreview.net/forum?id=tc7RFcx4hT}
}

@inproceedings{wu2018mvae,
  title={Multimodal Generative Models for Scalable Weakly-Supervised Learning},
  author={Wu, Mike and Goodman, Noah},
  booktitle={Advances in Neural Information Processing Systems},
  volume={31},
  year={2018},
  url={https://papers.neurips.cc/paper/7801-multimodal-generative-models-for-scalable-weakly-supervised-learning}
}

@article{choi2019embracenet,
  title={{EmbraceNet}: A Robust Deep Learning Architecture for Multimodal Classification},
  author={Choi, Jun-Ho and Lee, Jong-Seok},
  journal={Information Fusion},
  volume={51},
  pages={259--270},
  year={2019},
  doi={10.1016/j.inffus.2019.02.010}
}

@inproceedings{ma2021smil,
  title={{SMIL}: Multimodal Learning with Severely Missing Modality},
  author={Ma, Mengmeng and Ren, Jian and Zhao, Long and Tulyakov, Sergey and Wu, Cathy and Peng, Xi},
  booktitle={Proceedings of the AAAI Conference on Artificial Intelligence},
  volume={35},
  pages={2302--2310},
  year={2021},
  url={https://ojs.aaai.org/index.php/AAAI/article/view/16330}
}

@inproceedings{wang2023shaspec,
  title={Multi-Modal Learning With Missing Modality via Shared-Specific Feature Modelling},
  author={Wang, Hu and Chen, Yuanhong and Ma, Congbo and Avery, Jodie and Hull, Louise and Carneiro, Gustavo},
  booktitle={Proceedings of the IEEE/CVF Conference on Computer Vision and Pattern Recognition (CVPR)},
  pages={15878--15887},
  year={2023}
}

@misc{bose2024multimodalconformal,
  title={Conformal Prediction for Multimodal Regression},
  author={Bose, Alexis and Ethier, Jonathan and Guinand, Paul},
  year={2024},
  eprint={2410.19653},
  archivePrefix={arXiv},
  primaryClass={cs.LG},
  doi={10.48550/arXiv.2410.19653},
  url={https://arxiv.org/abs/2410.19653}
}

@article{aas2009paircopula,
  title={Pair-Copula Constructions of Multiple Dependence},
  author={Aas, Kjersti and Czado, Claudia and Frigessi, Arnoldo and Bakken, Henrik},
  journal={Insurance: Mathematics and Economics},
  volume={44},
  number={2},
  pages={182--198},
  year={2009},
  doi={10.1016/j.insmatheco.2007.02.001}
}

@article{papamakarios2021flows,
  title={Normalizing Flows for Probabilistic Modeling and Inference},
  author={Papamakarios, George and Nalisnick, Eric and Rezende, Danilo Jimenez and Mohamed, Shakir and Lakshminarayanan, Balaji},
  journal={Journal of Machine Learning Research},
  volume={22},
  number={57},
  pages={1--64},
  year={2021},
  url={https://jmlr.org/papers/v22/19-1028.html}
}

@inproceedings{azizi2026clear,
 author = {Azizi, Ilia and Bodik, Juraj and Heiss, Jakob M. and Yu, Bin },
 booktitle = {International Conference on Learning Representations},
 editor = {C. Vondrick and B. Hariharan and C. Raffel and L. Pinto and D. Yang and A. Faust},
 pages = {157741--157813},
 title = {{CLEAR}: Calibrated Learning for Epistemic and Aleatoric Risk},
 url = {https://proceedings.iclr.cc/paper_files/paper/2026/file/fff035a88e93fff416f66f8a52dd9067-Paper-Conference.pdf},
 year = {2026}
}

@article{rothe2018deep,
  title     = {Deep Expectation of Real and Apparent Age from a Single Image Without Facial Landmarks},
  author    = {Rothe, Rasmus and Timofte, Radu and Van Gool, Luc},
  journal   = {International Journal of Computer Vision},
  volume    = {126},
  number    = {2--4},
  pages     = {144--157},
  year      = {2018},
  publisher = {Springer},
  doi       = {10.1007/s11263-016-0940-3}
}

@misc{mercari2018kaggle,
  title={Mercari Price Suggestion Challenge},
  author={{Mercari, Inc.}},
  year={2018},
  howpublished={Kaggle competition},
  url={https://www.kaggle.com/c/mercari-price-suggestion-challenge}
}

@misc{pawpularity2021kaggle,
  title={{PetFinder.my} - Pawpularity Contest},
  author={{PetFinder.my}},
  year={2021},
  howpublished={Kaggle competition},
  url={https://www.kaggle.com/c/petfinder-pawpularity-score}
}

@article{wang2022e5,
  title   = {Multilingual {E5} Text Embeddings: A Technical Report},
  author  = {Wang, Liang and Yang, Nan and Huang, Xiaolong and
             Yang, Linjun and Majumder, Rangan and Wei, Furu},
  journal = {arXiv preprint arXiv:2402.05672},
  year    = {2024},
  doi     = {10.48550/arXiv.2402.05672},
  url     = {https://arxiv.org/abs/2402.05672}
}

@inproceedings{reimers2019sbert,
  title     = {Sentence-{BERT}: Sentence Embeddings using {S}iamese {BERT}-Networks},
  author    = {Reimers, Nils and Gurevych, Iryna},
  booktitle = {Proceedings of the 2019 Conference on Empirical Methods in Natural
               Language Processing and the 9th International Joint Conference
               on Natural Language Processing (EMNLP-IJCNLP)},
  pages     = {3982--3992},
  year      = {2019},
  doi       = {10.18653/v1/D19-1410}
}

@inproceedings{reimers2020multilingual,
  title     = {Making Monolingual Sentence Embeddings Multilingual using
               Knowledge Distillation},
  author    = {Reimers, Nils and Gurevych, Iryna},
  booktitle = {Proceedings of the 2020 Conference on Empirical Methods in
               Natural Language Processing (EMNLP)},
  pages     = {4512--4525},
  year      = {2020},
  doi       = {10.18653/v1/2020.emnlp-main.365}
}

@article{oquab2024dinov2,
  title={{DINOv2}: Learning robust visual features without supervision},
  author={Oquab, Maxime and Darcet, Timoth{\'e}e and Moutakanni, Th{\'e}o and Vo, Huy and Szafraniec, Marc and Khalidov, Vasil and Fernandez, Pierre and Haziza, Daniel and Massa, Francisco and El-Nouby, Alaaeldin and others},
  journal={Transactions on Machine Learning Research},
  year={2024}
}


\appendix

\section{Metric Definitions}
\label{app:metrics}

For test intervals $[L_j,U_j]$, $j=1,\ldots,n_{\mathrm{te}}$, with labels $Y_j$ (the symbol $n_{\mathrm{te}}$ avoids a clash with the conformal rank $m$ of \autoref{sec:weighted}), define the observed target range $\Delta_Y=\max_jY_j-\min_jY_j$ and assume $\Delta_Y>0$ whenever a normalized width is reported. We use
\[
  \mathrm{PICP}
  =
  \frac{1}{n_{\mathrm{te}}}
  \sum_{j=1}^{n_{\mathrm{te}}}
  \mathds{1}\{Y_j\in[L_j,U_j]\},
\]
\[
  \mathrm{MPIW}
  =
  \frac{1}{n_{\mathrm{te}}}
  \sum_{j=1}^{n_{\mathrm{te}}}(U_j-L_j),
\]
and the normalized interval width
\[
  \mathrm{NIW}
  =
  \frac{\mathrm{MPIW}}{\Delta_Y},
\]
which coincides with the NMPIW metric of \citet{azizi2025semf}.

\paragraph{Normalized calibrated interval width.}
For the width, NCIW, and interval CRPS metrics, we verify that $L_j\leq U_j$ for all evaluated examples. For NCIW, let $\hat f_j$ be the scaling center. The center is a supplied model center when it lies in $[L_j,U_j]$ and is otherwise replaced by the interval midpoint $(L_j+U_j)/2$, so that $r_j^-=\hat f_j-L_j\geq 0$ and $r_j^+=U_j-\hat f_j\geq 0$. Let
\[
  c_{\mathrm{test\text{-}cal}}
  =
  \inf\left\{
    c\geq 0:
    \frac{1}{n_{\mathrm{te}}}\sum_{j=1}^{n_{\mathrm{te}}}
    \mathds{1}\{Y_j\in[\hat f_j-c r_j^-,\hat f_j+c r_j^+]\}
    \geq 1-\alpha
  \right\},
\]
then
\[
  \mathrm{NCIW}
  =
  c_{\mathrm{test\text{-}cal}}\,
  \frac{\mathrm{MPIW}}{\Delta_Y}.
\]
We adopt $\inf\emptyset=+\infty$ and define $\mathrm{NCIW}=+\infty$ if no finite factor reaches the target coverage. Every reported evaluation set has $\Delta_Y>0$ and a finite feasible factor.
The supplied center is the fitted point prediction for point-predictor intervals and the fitted median for the cross-dataset quantile predictors, while the source-wise quantile intervals of \autoref{tab:aux} use the interval midpoint. In the SRED per-predictor diagnostic it is the XGBoost point prediction for the XGB point rows, the mean of the 50 outputs of the augmented decoder for the Aug.\ SEMF rows, and the midpoint of the fitted XGBoost endpoint pair for the XGB quantile rows. The midpoint replacement above is then applied only if that supplied center falls outside the final interval.

\paragraph{Interval CRPS.}
Interval CRPS is the average closed-form CRPS of $\mathrm{Unif}[L_j,U_j]$, with limiting value $|Y_j-L_j|$ for intervals of zero width. For width $w=U-L>0$ and $t=(y-L)/w$, the per-example score is
\[
\mathrm{CRPS}(\mathrm{Unif}[L,U],y)=
\begin{cases}
L-y+w/3, & y<L,\\
w(t^2-t+1/3), & L\leq y\leq U,\\
y-U+w/3, & y>U.
\end{cases}
\]

\paragraph{Coefficient of determination.}
$R^2$ in \autoref{tab:aux} is the usual coefficient of determination computed on the test split.

\section{Reproducibility Details}
\label{app:repro}

\paragraph{Splits and seeds.}
All quantitative experiments on the four datasets use seeds $0,\ldots,4$ and $\alpha=0.05$. The score-level simulation of \appref{app:smallsample} instead draws its $2{,}000$ replications from one random number generator with seed $0$. The cross-dataset benchmarks split each development corpus into $65\%$ fit, $15\%$ tune, and $20\%$ calibration subsets, which yields the counts in \autoref{tab:datasets}. Test sets are fixed before these five seeded development splits.

For Mercari, fixed seed-0 randomization samples 50,000 labeled rows from the competition training file and partitions them into 80\% development and 20\% test sets. Missing title or description text is replaced with the empty string, with no further filtering or deduplication. The five experiment seeds then split the 40,000-row development portion into fit, tune, and calibration subsets. Pawpularity uses a fixed seed-0 80/20 permutation of the labeled Kaggle training file, yielding 1,983 test examples. IMDB-WIKI uses a deterministic 90/10 assignment obtained by hashing each photo path, yielding 3,922 test examples. An overlap check finds that 43 of these test rows (1.10\%) have a normalized non-empty name, after case folding and stripping, also present in the training split. For SRED, a corresponding check finds that 37 of 1,109 test rows (3.34\%) exactly match a training row on latitude, longitude, living space, room count, and recorded rent, with rent used only for this check and never as an input feature. Such near-duplicates, common in listing corpora, can mildly inflate absolute point accuracy and may weaken exchangeability at the row level if re-listed units form dependent clusters, so a grouped or deduplicated split is a natural robustness check left to future work.
The empirical disagreement terciles use linear interpolation between adjacent order statistics of the tuning split. Boundary assignment is defined in \autoref{sec:mondrian}.

\begin{table}[!htbp]
\centering
\small
\setlength{\tabcolsep}{4pt}
\caption{Datasets, targets, and split sizes for the cross-dataset multi-modal benchmark.}
\label{tab:datasets}
\begin{tabular*}{\linewidth}{@{\extracolsep{\fill}}llrrrr}
\toprule
Dataset & Target & Fit & Tune & Cal. & Test \\
\midrule
SRED & $\log(\mathrm{rent})$ & 6,498 & 1,499 & 1,999 & 1,109 \\
Mercari & $\log(\mathrm{price}+1)$ & 26,000 & 6,000 & 8,000 & 10,000 \\
Pawpularity & score & 5,154 & 1,189 & 1,586 & 1,983 \\
IMDB-WIKI & age & 22,238 & 5,132 & 6,843 & 3,922 \\
\bottomrule
\end{tabular*}

\end{table}

\paragraph{Selection of the disagreement scale.}
The cross-dataset benchmark constructs its candidate values of the canonical parameter $\gamma$ from the source grids
\[
  a_0\in\{10^{-3},10^{-2},0.1,0.5,1,3\},
  \qquad
  a_1\in\{0,0.25,0.5,1,2,4,8,16\}.
\]
It takes the sorted unique ratios $\gamma=a_1/a_0$, evaluates the normalized representative $a_\gamma(x)=\sqrt{1+\gamma d(x)^2}$, and minimizes the conformal quantile of the scaled scores multiplied by the mean scale, both computed on the tuning split. This objective is a width proxy. Candidate order is deterministic, and strict objective improvement retains the smallest $\gamma$ on an exact tie. The disagreement-scaled tuner and its calibration step use the clipped score of \autoref{eq:scaled_score}. The objective does not enter the validity proof, and the selected scale and final quantiles are retained for every run. In the cross-dataset benchmark, disagreement is standardized before grid search by the interquartile range of the tuning split, with fallbacks to the standard deviation and then to a unit scale. This standardization belongs to $\mathcal D_{\mathrm{pre}}$. In the units of unstandardized disagreement, the selected coefficient is $\gamma/c_d^2$, where $c_d$ is that scale. The SRED diagnostic retains the two-parameter grid with the same $a_0$ values and $a_1\in\{0,0.5,1,3,10,30\}$ on unstandardized disagreement. Its fixed iteration order likewise resolves exact objective ties deterministically.
The counts of strict improvements and exact matches compare the underlying per-run values at full precision. ``Exact tie'' means numerical equality before rounding, with no additional floating-point tolerance.
The XGBoost-learned scale uses 200 trees, maximum depth 3, learning rate $0.05$, and the squared error loss. It is fitted on the fused covariates of the tuning split to $\log\max\{r,10^{-6}\}$, where $r$ is the non-negative tuning residual or CQR score. Its exponentiated predictions are normalized by their median on the tuning split and frozen before calibration. Calibration scores are divided by this scale, and the test correction is multiplied by it.

\paragraph{Binned-scaled variant.}
The secondary binned-scaled variant of \autoref{sec:setup} replaces the continuous scale with a monotone piecewise-constant scale on the same standardized disagreement. Its three bins use the terciles of the tuning split with the edge convention of \autoref{sec:mondrian}. The candidate multiplier vectors are the identity $(1,1,1)$ together with $(1,\sqrt{r},r)$ for $r\in\{1.1, 1.25, 1.5, 2, 3, 4, 6\}$, so every candidate is monotone by construction. For each candidate, the tuner computes the conformal quantile of the clipped scores divided by the binned scale and minimizes the mean of the base interval width plus twice that quantile times the scale, both on the tuning split. For point predictors the base width is zero, and this objective is proportional to the width proxy of the continuous tuner. Candidates are evaluated with the identity first and then in increasing ratio order, strict improvement retains the earlier candidate on an exact tie, and the selected edges and multipliers are frozen before calibration. Calibration then applies the clipped score and interval construction of \autoref{sec:weighted} with the selected binned scale in place of $a_\gamma$. Across the 60 paired cross-dataset headline runs, the binned variant matches or improves marginal CRPS in 53 comparisons and width in 50.

\paragraph{Base predictors.}
Cross-dataset XGBoost point predictors use 700 trees, maximum depth 6, learning rate $0.04$, subsampling and column subsampling of $0.85$, and early stopping (50 rounds) monitored on the tuning split. Cross-dataset XGBoost quantile models use 500 trees at levels $\alpha/2$, $1/2$, and $1-\alpha/2$ with the same depth and learning rate and no early stopping. The per-source auxiliary predictors $g_k$ in this benchmark are per-source XGBoost point regressors with the same 700-tree settings and early stopping on the tuning split. The ridge stacker of \autoref{sec:main-results} combines these per-source point predictions with ridge regression at regularization strength $1.0$, fitted with an intercept on the tuning split.

The SRED per-predictor diagnostic uses different fixed base models. Its XGBoost point base uses 2,000 trees, maximum depth 6, learning rate $0.03$, subsampling and column subsampling of $0.8$, and 50-round early stopping on the trailing 10\% of the fitting split. Its two XGBoost quantile endpoints use 400 trees, maximum depth 6, learning rate $0.05$, subsampling and column subsampling of $0.8$, and no early stopping. The five predictors of the diagnostic's source blocks use 300 trees, maximum depth 6, learning rate $0.05$, subsampling and column subsampling of $0.8$, and 30-round early stopping on a fixed 10\% holdout of the fitting split.

In \autoref{tab:aux}, the source-wise, solo, and concatenated predictors follow the auxiliary configuration below. The cross-dataset missing-modality experiment uses the 700-tree configuration of the point model above.

\paragraph{Auxiliary benchmark models.}
The source-wise predictors of \autoref{tab:aux} are an XGBoost quantile triple on the tabular block and one multi-layer perceptron (MLP) trained with the pinball loss per embedded source. The solo rows evaluate these same fitted predictors alone, and the concatenated baseline applies the triple's configuration to the concatenated blocks. The triple fits one model per level $\alpha/2$, $1/2$, and $1-\alpha/2$ with 400 trees on all four datasets, maximum depth 6, learning rate $0.05$, and subsampling and column subsampling of $0.9$. Each source MLP maps its embedding through 256 and 128 GELU units with dropout $0.1$ to the three levels and minimizes the multi-quantile pinball loss with AdamW at learning rate $10^{-3}$, weight decay $10^{-4}$, and batch size 256 for at most 50 epochs, with early stopping after 8 epochs without improvement at threshold $10^{-5}$ on a seeded 10\% split of its fitting fold and restoration of the best weights. The gate maps the concatenated blocks through 128 GELU units with dropout $0.1$ to per-example softmax weights over the source predictors, shared across the three levels. Each source triple is sorted at prediction time, the gate output at level $\tau$ is the convex combination $\sum_k w_k(x)\,\hat q_{k,\tau}(x)$, and the fused triple is sorted again before calibration and evaluation. The gate trains on the tuning split only with the same loss, optimizer settings, and batch size, at most 80 epochs for SRED and Pawpularity and 60 for Mercari and IMDB-WIKI, and patience 12 at threshold $10^{-5}$ on a seeded 15\% split of the tuning fold, with the best validation weights restored. For the source-wise, solo, concatenated, and gate models, the target is standardized with statistics of the fitting fold, and all metrics are reported on the original scale.

\paragraph{Auxiliary neural baselines.}
Both fusion baselines minimize mean squared error on the standardized target. The MLP on concatenated features applies three blocks of layer normalization, a linear map, GELU, and dropout $0.2$ with widths 512, 256, and 128 before a linear output, at learning rate $10^{-3}$. The cross-attention baseline encodes each source into a 64-dimensional token through a per-source linear encoder with layer normalization, GELU, and dropout, applies two pre-norm transformer layers with four-head self-attention, GELU feed-forward activation, and feed-forward expansion 2, mean-pools the tokens, and finishes with a head that applies layer normalization, one 64-unit GELU layer with dropout, and a linear output, at learning rate $5\times10^{-4}$. Each listed block applies its operations in the order given, and every dropout in both baselines, including attention and feed-forward dropout, uses probability $0.2$. Both use AdamW with weight decay $10^{-4}$ and batch size 128 for at most 100 epochs under a cosine schedule with 5\% linear warmup and fresh per-epoch shuffling, with early stopping at patience 10 and threshold $10^{-6}$ on the internal split described below and restoration of the best weights. The experiment seed drives initialization, shuffling, and every internal split.

\paragraph{Frozen encoders.}
Text and image blocks are frozen pretrained embeddings, mean-pooled within each modality. We use multilingual-e5-large \citep{wang2022e5}, paraphrase-multilingual-MiniLM-L12-v2 \citep{reimers2019sbert,reimers2020multilingual}, and DINOv2 \citep{oquab2024dinov2}. Cross-dataset SRED embeds the listing header and description with multilingual-e5-large and the photo montage and satellite views with DINOv2 ViT-B/14. The SRED SEMF-derived diagnostics (\autoref{tab:predagn}, \autoref{tab:regime}, \autoref{fig:regime}, \autoref{fig:uncertainty}) instead build their 36-dimensional representation with paraphrase-multilingual-MiniLM-L12-v2 and DINOv2 ViT-S/14. Mercari embeds the item name and description with the same MiniLM model, Pawpularity embeds photos with DINOv2 ViT-S/14, and IMDB-WIKI embeds names with MiniLM and faces with DINOv2 ViT-S/14. Tabular columns are standardized and categorical variables one-hot encoded using statistics of the fitting split in the XGBoost pipelines. The neural baselines of \autoref{tab:aux} pool the fit and tuning examples for pre-processing, target standardization, and parameter training, then use a seeded random 90/10 internal train/validation split of that pool for early stopping; the calibration and test splits remain untouched. All runs use the per-dataset encoders stated above. A source is a fixed feature block with its own predictor. The cross-dataset source sets in \autoref{tab:cross-predagn} have $K=3$ for SRED and IMDB-WIKI and $K=2$ for Mercari and Pawpularity. The SRED SEMF-derived diagnostic uses the five field blocks of \appref{app:semf}. The IMDB-WIKI subset keeps photos with a single face and valid metadata. Its tabular features are gender, photo year, and the face detection score, with neither date of birth nor identity codes. Nearly every person contributes one photo (37,903 names over 38,135 rows).

\paragraph{SRED SEMF-derived diagnostic.}
The diagnostic of \autoref{tab:predagn} uses precomputed SEMF predictions from 8,481 fit examples, a 1,497-example validation set, and 1,109 test examples. The pipeline drops 18 rows duplicated in every feature and the target before splitting the remaining 9,978 training rows $85/15$, which explains the difference from \autoref{tab:datasets}. For experiment seed $s$, a pseudorandom split with seed $s+10{,}003$ divides the validation set into tuning and calibration halves.

Its 36 features comprise four tabular variables plus eight principal component analysis (PCA) components per embedded text or image field, with PCA fitted on all 9,996 training rows before the duplicate removal and the 85/15 split described above. Upstream training statistics standardize the log-rent target. MPIW and interval CRPS are therefore in standardized units, PICP is unchanged, and NCIW is dimensionless.

Fitted XGBoost-quantile pairs are not rearranged before calibration. Crossed pairs remain valid CQR scores, and all evaluated endpoints are ordered. This diagnostic is therefore a departure from the construction with ordered endpoints in \autoref{sec:cp-bg} and Algorithm~\ref{algo:macc}. The cross-dataset pipeline instead sorts each fitted quantile triple before calibration, which gives weakly wider base endpoints than the pairwise rearrangement of \autoref{sec:cp-bg}.

Missing-modality tests zero the standardized modality block in calibration and test examples without adding an indicator. For seed $s$, a fixed permutation seeded by $s+210{,}517$ assigns 554 held-out examples to calibration and 555 to evaluation. This rule was fixed independently of labels and diagnostics.

\section{SEMF-derived Base Predictor}
\label{app:semf}

The SRED diagnostics use two base predictors derived from SEMF \citep{azizi2025semf}. The SEMF formulation represents the conditional law of $Y$ through per-source latent variables,
\begin{equation}
  p(y\mid x)
  =
  \int p_\theta(y\mid z_1,\ldots,z_K)
  \prod_{k=1}^{K} p_{\phi_k}(z_k\mid x^{(k)})\,
  dz_1\cdots dz_K,
  \qquad z=(z_1,\ldots,z_K).
  \label{eq:semf_factorization}
\end{equation}
Each source can have a distinct encoder family, while the decoder consumes a fused latent representation. In the SEMF formulation, ``source'', $K$, and $x^{(k)}$ denote encoder groups rather than the source blocks used for calibration in the main text. Likewise, $R$ and $s$ in this appendix count latent replications and response draws rather than denoting the Mondrian score function and scaled scores of Algorithm~\ref{algo:macc}. The 36-feature SRED representation uses nine consecutive encoder groups of four features. Calibration disagreement instead uses five input blocks: one tabular block, two text fields, and two image fields. The masks aggregate these blocks into tabular, text, and image availability regimes. This distinction does not affect the conformal wrapper.

The reported SEMF checkpoints use $R=10$ latent replications during training, $R_{\mathrm{infer}}=50$ at inference, at most 15 outer iterations, and outer early-stopping patience 5. Each group of four features has a two-dimensional latent with fixed standard deviation $0.1$. All SEMF encoder and decoder learners are 100-tree XGBoost regressors with learning rate $0.05$, subsampling and column subsampling of $0.8$, and no early stopping within the learners.

Training uses a generalized Expectation-Maximization (EM) style loop. On the first iteration the replication weights are uniform, $w_{i,r}=1/R$. On each subsequent iteration, writing $\phi'$ and $\theta'$ for parameters held fixed from the preceding iteration and drawing $z_{i,r}\sim p_{\phi'}(z\mid x_i)$ for $r=1,\ldots,R$, the E-step forms the self-normalized weights
\begin{equation}
  w_{i,r}
  =
  \frac{p_{\theta'}(y_i\mid z_{i,r})}
       {\sum_{t=1}^{R} p_{\theta'}(y_i\mid z_{i,t})},
  \label{eq:semf_weights}
\end{equation}
assuming a positive denominator. The M-step then refits the encoders and decoder with weighted supervised losses. We call this EM-style because these updates approximate rather than exactly maximize the likelihood.

At inference, SEMF first draws latents $z_r\sim p(z\mid x)$ and then responses $\widetilde Y_{r,s}\sim p(y\mid z_r)$ for $s=1,\ldots,R$ \citep[Section~2.4]{azizi2025semf}. For the SRED diagnostics, the response draw is omitted and the $R_{\mathrm{infer}}=50$ decoder outputs $f_\theta(z_r)$ are retained for each example. Their linearly interpolated empirical 2.5th and 97.5th percentiles form the base endpoints. These ensembles propagate only the latent draws through the decoder and are not samples from the full predictive law $p(y\mid x)$.

The augmented variant instead retains $f_{\theta,\mathrm{aug}}(x,z_r)$. Its decoder uses 1,000 XGBoost trees, maximum depth 6, learning rate $0.03$, subsampling and column subsampling of $0.8$, and 50-round early stopping on the validation split. This augmented decoder is fitted on the latent conditional means $\mathbb{E}[z\mid x]$ under the frozen encoders and is evaluated on the $R_{\mathrm{infer}}$ sampled latents, a train--inference mismatch that may contribute to the narrow decoder-output ensemble and the underdispersion reported below. The diagnostic across base predictors (\autoref{tab:predagn}, Aug.\ SEMF rows) and predicted-uncertainty stratification (\autoref{fig:uncertainty}) use this augmented decoder. The missing-modality diagnostic (\autoref{tab:regime}, \autoref{fig:regime}) uses the original decoder, which consumes only the fused latent sample.

With all inputs observed, the raw ensemble intervals are markedly underdispersed: across five seeds their empirical coverage is about $0.12$ for the augmented decoder and $0.19$ for the original decoder at nominal $0.95$. About $94\%$ of the calibrated augmented-SEMF width in \autoref{tab:predagn} is supplied by the conformal radius, so that row is close to a point construction on absolute residuals. The MC predictions are generated once without label access under a fixed seeded rule and belong to $\mathcal D_{\mathrm{pre}}$. We therefore report these analyses as diagnostics.

The reported SEMF-derived results use five checkpoints in which non-constant E-step weights are correctly aligned with the replicated M-step rows. All downstream analyses use these checkpoints.

\section{Additional Results and Diagnostics}
\label{app:additional}

\subsection{XGBoost-learned Scale Sensitivity Analysis}
\label{app:xgb-scale}

\begin{table}[!htbp]
\centering
\small
\setlength{\tabcolsep}{2.4pt}
\caption{Sensitivity analysis for an XGBoost-learned scale across four multi-modal datasets. Each dataset aggregates the same three base predictors and five seeds as \autoref{tab:cross-predagn}; entries are the mean $\pm$ one sample standard deviation over 15 predictor--seed runs. Marginal and disagreement-scaled rows are repeated for context. \textit{XGBoost-learned scale} is a specific normalized split conformal/CQR comparator. On the tuning split, a shallow XGBoost regressor maps fused features to the logarithm of the non-negative residual or CQR score after a $10^{-6}$ floor. Its exponentiated predictions are normalized by their median on the tuning split and frozen before calibration. Calibration scores are divided by this scale, and the test correction is multiplied by it. This comparator is not intended to represent learned normalization methods in general. MPIW and CRPS have dataset-specific target units, and NCIW is dimensionless.}
\label{tab:cross-predagn-sensitivity}
\begin{tabular*}{\linewidth}{@{\extracolsep{\fill}}llcccc}
\toprule
Dataset & Calibration & PICP & MPIW & NCIW & CRPS \\
\midrule
SRED & Marginal & $0.943\pm0.005$ & $0.714\pm0.136$ & $0.349\pm0.072$ & $0.103\pm0.020$ \\
 & Disagreement-scaled & $0.942\pm0.006$ & $0.689\pm0.128$ & $0.341\pm0.069$ & $0.102\pm0.020$ \\
 & \shortstack[l]{XGBoost-learned\\scale} & $0.941\pm0.006$ & $0.915\pm0.338$ & $0.452\pm0.156$ & $0.119\pm0.035$ \\
\cmidrule{1-6}
Mercari & Marginal & $0.950\pm0.002$ & $2.439\pm0.083$ & $0.326\pm0.011$ & $0.373\pm0.025$ \\
 & Disagreement-scaled & $0.950\pm0.002$ & $2.439\pm0.084$ & $0.326\pm0.011$ & $0.373\pm0.025$ \\
 & \shortstack[l]{XGBoost-learned\\scale} & $0.951\pm0.002$ & $2.635\pm0.386$ & $0.351\pm0.053$ & $0.387\pm0.044$ \\
\cmidrule{1-6}
Pawpularity & Marginal & $0.953\pm0.005$ & $84.426\pm4.608$ & $0.858\pm0.048$ & $12.791\pm2.123$ \\
 & Disagreement-scaled & $0.952\pm0.006$ & $83.020\pm5.599$ & $0.847\pm0.054$ & $12.717\pm2.188$ \\
 & \shortstack[l]{XGBoost-learned\\scale} & $0.953\pm0.006$ & $422.601\pm792.194$ & $4.320\pm8.032$ & $40.844\pm65.663$ \\
\cmidrule{1-6}
IMDB-WIKI & Marginal & $0.952\pm0.003$ & $40.616\pm4.799$ & $0.447\pm0.052$ & $6.489\pm1.294$ \\
 & Disagreement-scaled & $0.951\pm0.003$ & $40.464\pm4.650$ & $0.446\pm0.051$ & $6.478\pm1.286$ \\
 & \shortstack[l]{XGBoost-learned\\scale} & $0.952\pm0.003$ & $45.051\pm10.940$ & $0.495\pm0.118$ & $6.802\pm1.704$ \\
\bottomrule
\end{tabular*}

\end{table}
\FloatBarrier

\subsection{SRED Diagnostic Across Base Predictors}
\label{app:predagn}

\begin{table}[!htbp]
\centering
\small
\setlength{\tabcolsep}{3.5pt}
\caption{SRED calibration across base predictors at target coverage 0.95. Cells are five-seed means $\pm$ one sample standard deviation, reported as descriptive variation across runs rather than standard errors, confidence intervals, or inferential uncertainty. MPIW and CRPS use standardized log-rent units, while NCIW is dimensionless. Lower is better for MPIW, NCIW, and CRPS, while PICP should be near 0.95. In the calibration labels, ``dis.-scaled'' and ``dis.-Mondrian'' abbreviate disagreement-scaled and disagreement-Mondrian calibration. Bold marks the best value per base predictor for the metrics where lower is better. All rows share a precomputed representation whose PCA maps were fitted before the SEMF training data were divided into fitting and validation subsets, and SEMF-derived rows additionally reuse validation data from SEMF fitting. The table is therefore diagnostic rather than a formal proposition instance.}
\label{tab:predagn}
\begin{tabular*}{\linewidth}{@{\extracolsep{\fill}}llcccc}
\toprule
Base predictor & Calibration & PICP & MPIW & NCIW & CRPS \\
\midrule
XGB point & marginal & $0.951\pm0.007$ & $1.826\pm0.069$ & $0.248\pm0.006$ & $0.257\pm0.004$ \\
XGB point & dis.-Mondrian & $0.956\pm0.007$ & $1.860\pm0.139$ & $0.241\pm0.011$ & $0.256\pm0.007$ \\
XGB point & dis.-scaled & $0.953\pm0.010$ & $\mathbf{1.795\pm0.084}$ & $\mathbf{0.239\pm0.004}$ & $\mathbf{0.254\pm0.003}$ \\
\midrule
Aug. SEMF & marginal & $0.949\pm0.008$ & $1.814\pm0.081$ & $0.246\pm0.004$ & $0.257\pm0.004$ \\
Aug. SEMF & dis.-Mondrian & $0.952\pm0.008$ & $1.817\pm0.112$ & $0.245\pm0.011$ & $0.255\pm0.005$ \\
Aug. SEMF & dis.-scaled & $0.949\pm0.009$ & $\mathbf{1.741\pm0.080}$ & $\mathbf{0.239\pm0.007}$ & $\mathbf{0.252\pm0.004}$ \\
\midrule
XGB quantile & marginal CQR & $0.944\pm0.014$ & $2.205\pm0.172$ & $0.308\pm0.002$ & $0.321\pm0.007$ \\
XGB quantile & dis.-Mondrian & $0.944\pm0.012$ & $2.179\pm0.153$ & $0.304\pm0.004$ & $0.318\pm0.005$ \\
XGB quantile & dis.-scaled & $0.942\pm0.011$ & $\mathbf{2.112\pm0.128}$ & $\mathbf{0.298\pm0.002}$ & $\mathbf{0.316\pm0.004}$ \\
\bottomrule
\end{tabular*}

\end{table}

\autoref{tab:predagn} summarizes the SRED calibration diagnostic per base predictor. \appref{app:repro} details the precomputed 36-dimensional representation and splits. Disagreement-scaled split conformal/CQR gives the lowest mean NCIW and mean raw width for all three base predictors in this sweep, while disagreement-Mondrian gives the explicitly stratified validity mechanism. For the augmented SEMF base, scaling leaves mean PICP at $0.949$ while reducing MPIW from $1.814$ to $1.741$ and CRPS from $0.257$ to $0.252$.

\subsection{SRED Mask-matched Missing-modality Table}
\label{app:regime-table}

\autoref{tab:regime} reports the full per-regime coverage and width summarized by \autoref{fig:regime}.

\begin{table}[!htbp]
\centering
\caption{Held-out SRED SEMF-derived diagnostic coverage under test-time modality masking. Cells report the mean $\pm$ one sample standard deviation over the five seeds as descriptive variation across runs, not standard errors or confidence intervals. Global CQR uses the full-regime quantile. Mask-matched CQR applies the row's fixed mask to every calibration and evaluation example; it is the special case of Proposition~\ref{prop:mondrian} with a constant stratum label, so its quantile is the ordinary per-mask split-conformal quantile. MPIW is in standardized log-rent units. The held-out test predictions are split into calibration and evaluation subsets.}
\label{tab:regime}
\begin{tabular*}{\linewidth}{@{\extracolsep{\fill}}lcccc}
\toprule
Regime & Global PICP & Global MPIW & \shortstack[c]{Mask-matched\\PICP} & \shortstack[c]{Mask-matched\\MPIW} \\
\midrule
Full & $0.941\pm0.018$ & $3.053\pm0.147$ & $0.941\pm0.018$ & $3.053\pm0.147$ \\
No text & $0.938\pm0.015$ & $3.056\pm0.152$ & $0.942\pm0.015$ & $3.140\pm0.173$ \\
No image & $0.923\pm0.018$ & $3.050\pm0.148$ & $0.946\pm0.014$ & $3.393\pm0.107$ \\
Tabular only & $0.921\pm0.020$ & $3.045\pm0.157$ & $0.947\pm0.013$ & $3.472\pm0.167$ \\
\bottomrule
\end{tabular*}

\end{table}

\begin{figure}[!htbp]
  \centering
  \includegraphics[width=0.7\linewidth]{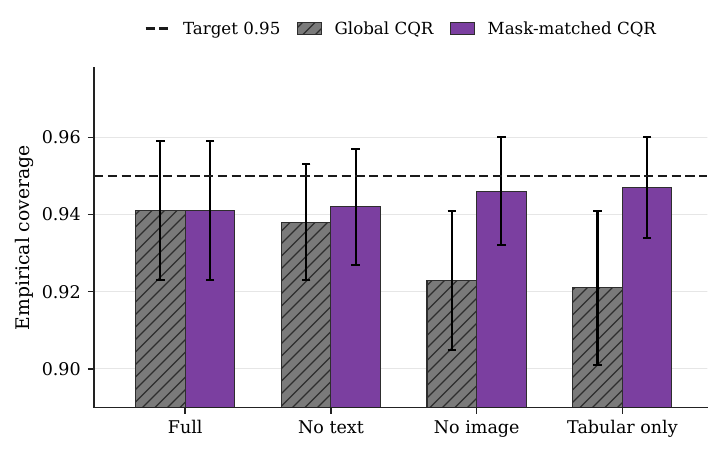}
  \caption{Global and mask-matched coverage under missing modalities, shown on a zoomed y-axis. Bars are five-seed means and whiskers span $\pm$ one sample standard deviation across seeds as descriptive variation across runs, not standard errors or confidence intervals. The largest global undercoverage appears in the tabular-only and no-image masks, while mask-matched recalibration stays closer to the 0.95 target. Each mask uses the special case of Proposition~\ref{prop:mondrian} with a constant stratum label.}
  \label{fig:regime}
\end{figure}
\FloatBarrier

\subsection{Strata of Predicted Uncertainty}
\label{app:uncert}

Bins of predicted uncertainty show only mild variation. For the augmented SEMF-derived predictor, global CQR coverage is $0.952\pm0.026$, $0.942\pm0.026$, and $0.946\pm0.014$ in the low, middle, and high bins of MC width. MC-width Mondrian calibration gives $0.945\pm0.020$, $0.941\pm0.031$, and $0.953\pm0.022$, respectively. The differences between bins and between methods are small relative to the descriptive variation across the five seeds; this is not an inferential comparison, so the figure is a weak diagnostic rather than evidence of a strong conditional repair. This analysis avoids the SEMF validation split for conformal calibration, but because the MC-width bin edges are estimated from the calibration half's predicted widths, it remains a post-hoc diagnostic rather than a formal instance of Proposition~\ref{prop:mondrian}.

\begin{figure}[!htbp]
  \centering
  \includegraphics[width=0.7\linewidth]{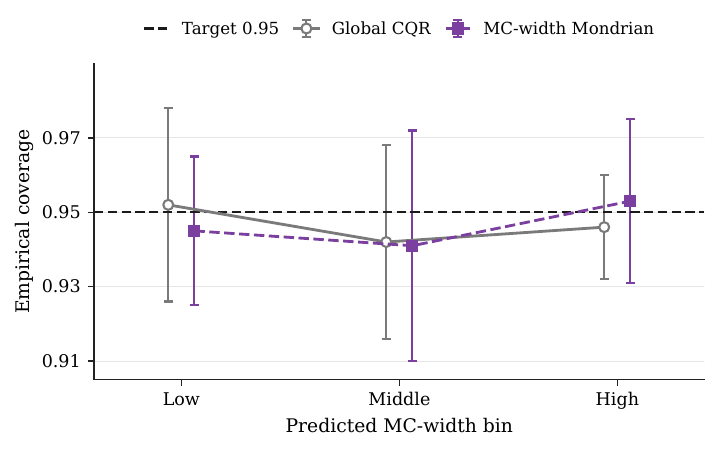}
  \caption{Diagnostic coverage on the held-out SRED test split across bins of MC ensemble width for the augmented SEMF-derived predictor, shown on a zoomed y-axis. Markers are five-seed means and whiskers span $\pm$ one sample standard deviation across seeds as descriptive variation across runs, not standard errors or confidence intervals. Both profiles remain near the target, and their differences are small relative to that variation; no inferential comparison is intended.}
  \label{fig:uncertainty}
\end{figure}
\FloatBarrier

\subsection{Additional Benchmarks of Multi-modal Point Prediction}
\label{app:aux}

\autoref{tab:aux} reports the additional multi-modal benchmark. Each dataset uses tabular features plus frozen text and/or image embeddings when available. The source-wise model trains per-source quantile predictors and fuses them with a gate trained on the tuning split. This gated model is distinct from the source-wise ensemble of \autoref{sec:main-results}, which uses fixed weights inversely proportional to RMSE on the tuning split. It is compared with homogeneous XGBoost on concatenated features and two neural network baselines. These are an MLP on concatenated features and a cross-attention network for multi-modal inputs.

\begin{table}[!htbp]
\centering
\scriptsize
\setlength{\tabcolsep}{2.4pt}
\caption{Additional multi-modal benchmarks. $R^2$ cells report the mean $\pm$ one sample standard deviation over five seeds as descriptive variation across runs, not standard errors or confidence intervals; PICP and NCIW are five-seed means. Source-wise and Concat $R^2$ use the median heads of pinball-loss quantile models, whereas the MLP and cross-attention (X-attn) columns are point models trained with mean squared error (MSE), so their point objectives are not identical. The source-wise columns report marginal-CQR intervals for the gated source-wise model. Concat is homogeneous XGBoost on concatenated features. Best solo reports, for each dataset, the solo predictor with the largest mean $R^2$ over five seeds among the available tabular-only, text-only, and image-only models. These solos are the source-wise model's own per-source predictors evaluated alone, an XGBoost quantile triple on the tabular block and an MLP trained with the pinball loss per embedded source (\appref{app:repro}), so solo $R^2$ likewise uses median outputs.}
\label{tab:aux}
\begin{tabular*}{\linewidth}{@{\extracolsep{\fill}}lccccccc}
\toprule
Dataset & Src.-wise $R^2$ & Best solo $R^2$ & Concat $R^2$ & MLP $R^2$ & X-attn $R^2$ & Src. PICP & Src. NCIW \\
\midrule
SRED & $0.755\pm0.007$ & $0.742\pm0.007$ & $0.736\pm0.007$ & $0.817\pm0.010$ & $0.803\pm0.007$ & $0.943$ & $0.287$ \\
Mercari & $0.343\pm0.008$ & $0.275\pm0.006$ & $0.309\pm0.006$ & $0.382\pm0.014$ & $0.396\pm0.007$ & $0.948$ & $0.312$ \\
Pawpularity & $0.216\pm0.025$ & $0.240\pm0.006$ & $0.234\pm0.006$ & $0.202\pm0.017$ & $0.221\pm0.023$ & $0.953$ & $0.774$ \\
IMDB-WIKI & $0.738\pm0.006$ & $0.739\pm0.005$ & $0.678\pm0.002$ & $0.734\pm0.011$ & $0.753\pm0.002$ & $0.950$ & $0.347$ \\
\bottomrule
\end{tabular*}

\end{table}

\autoref{tab:aux} separates calibration sanity checks for the source-wise base model from descriptive point accuracy. Among the multi-modal fusion models, the neural columns trained with MSE lead on SRED, Mercari, and IMDB-WIKI, while homogeneous XGBoost leads on Pawpularity. The point objectives differ as described in the caption, and the pre-calibration data allocations differ as well, since the source-wise quantile predictors are fitted on the fitting split with the gate on the tuning split whereas the neural baselines train on the pooled fit and tuning splits, so this is neither a like-for-like optimizer comparison nor a like-for-like comparison of data budgets. The solo diagnostic also shows that Pawpularity is nearly dominated by a single source: its image-only model attains $R^2=0.240$, above the best non-solo value reported in the table, $0.234$. The source-wise model improves over homogeneous XGBoost on SRED, Mercari, and IMDB-WIKI and gives marginal-CQR intervals close to the nominal level. The wrapper itself is evaluated across base predictors and datasets in \autoref{sec:main-results}.

\FloatBarrier
\subsection{Scale Tuning at Small Sample Sizes}
\label{app:smallsample}

Sample size constrains the calibration layer before any efficiency question arises. The remarks after Proposition~\ref{prop:mondrian} give the floors for a finite interval, at least $19$ calibration scores overall and in each protected stratum at $\alpha=0.05$. Realized coverage is also only as stable as the calibration count allows. For i.i.d.\ scores from a continuous distribution, the coverage induced by a realized calibration set follows the exact $\mathrm{Beta}(m,\,n{+}1{-}m)$ law with standard deviation close to $\sqrt{\alpha(1-\alpha)/n}$, so holding realized coverage within two, one, and half a percentage point of the target at one standard deviation requires roughly $120$, $475$, and $1{,}900$ calibration examples. These are stability heuristics rather than validity thresholds.

A simulation of the scores alone isolates the data cost of tuning the scale of \autoref{eq:scale}. The calibration layer sees only pairs of scores and disagreements, so we draw $d\sim|\mathcal N(0,1)|$ and absolute residuals $e=\sigma(d)\,|\varepsilon|$ with $\varepsilon\sim\mathcal N(0,1)$ and $\sigma(d)=\sqrt{1+\gamma^{*}d^{2}}$, where $\gamma^{*}\in\{0,1,4,16\}$ controls how informative the disagreement truly is. Four rules spend the same labeled budget $N\in\{25,31,50,100,200,400,800,1600,3200\}$. The marginal rule calibrates the unscaled score on all $N$ observations. The fixed rule sets $\gamma=1$ on $d$ standardized by its known population interquartile range, a pre-specified constant, and also calibrates on all $N$. The tuned rule reserves $40\%$ of $N$, rounded to the nearest integer, for the deployed tuner of \appref{app:repro}, with its 37 candidate ratios, its interquartile standardization on the tuning sample with its fallbacks, its width proxy as the tuning objective, and its rule of keeping the smallest $\gamma$ on ties, and calibrates on the remaining $60\%$. The oracle calibrates with the conditionally valid scale $a=\sigma$. Because the noise is Gaussian, each replication's expected test coverage and width are one-dimensional integrals evaluated by dense trapezoidal quadrature, and \autoref{fig:gamma-sim} reports means over $2{,}000$ replications.

\begin{figure}[!htbp]
  \centering
  \includegraphics[width=0.98\linewidth]{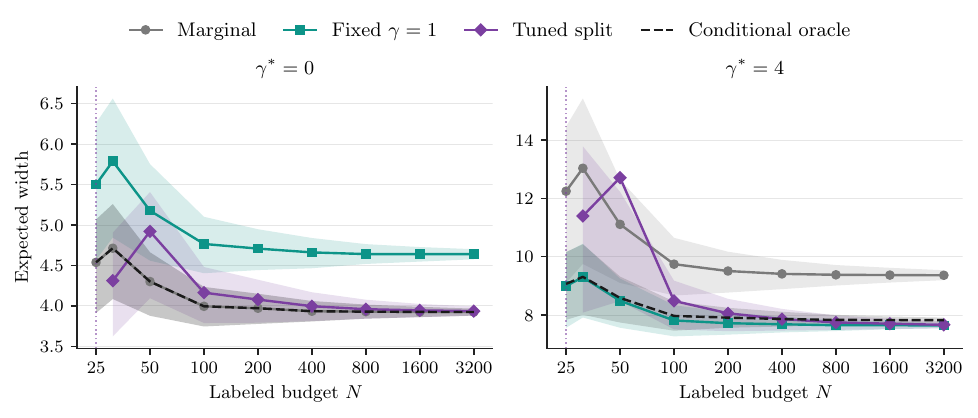}
  \caption{Expected interval width against the labeled budget $N$ in the score-level simulation, shown for two of the four simulated settings. The tuned rule runs the same tuner that the main experiments deploy, and fixed $\gamma=1$ is the pre-specified fallback for small samples. Lines are means over $2{,}000$ replications and shaded bands span the interquartile range across replications as variability from sampling the calibration scores. At the smallest finite budgets the tuned width distribution is heavy-tailed, so its mean can sit above that band. The dotted line marks $N=25$, where the tuned rule's calibration part falls below the $19$-score floor, and the tuned curves begin at $N=31$, the knife-edge budget discussed in the text. Without a signal (left), the conditional oracle coincides with the marginal rule, fixing $\gamma=1$ pays a persistent width premium, and the tuned rule approaches the marginal rule from above beyond the knife edge. With a strong signal (right), fixed $\gamma=1$ tracks the conditional oracle. The tuned rule converges to the width-optimal scale, which is flatter than the oracle, and therefore ends below the oracle at the largest budgets. It stays above the better of the marginal and fixed rules throughout, by under one percent at the largest budgets, which is the price of reserving part of the budget for tuning.}
  \label{fig:gamma-sim}
\end{figure}

Three patterns emerge. First, the floors bite exactly as stated. At $N=25$ the tuned rule's calibration part falls below $19$ scores and every tuned interval is infinite, and $N=31$ is the first budget with a finite tuned quantile under this split. At that knife edge the tuner degenerates, selecting $\gamma=0$ in every replication because twelve tuning scores cannot produce a finite conformal quantile, and the tuned rule looks narrower than the marginal rule only because its calibration count sits exactly at the floor, where the split-conformal index is least conservative, with expected coverage $19/20=0.950$ against the marginal rule's $31/32\approx0.969$, not because the scale helps. Second, splitting is costly at small budgets. At $N=50$ the tuned rule is $14\%$ to $57\%$ wider than the better of the marginal and fixed rules, and it remains $4$ to $9\%$ wider at $N=100$. Beyond the $N=31$ knife edge, tuning first beats the better of the marginal and fixed rules at about $N=400$ for $\gamma^{*}=1$ and $N=200$ for $\gamma^{*}=16$, and not within the studied range for $\gamma^{*}\in\{0,4\}$, although against the marginal rule alone it is already narrower from roughly $N=100$ whenever the signal is real. Fixing $\gamma=1$ instead encodes a prior. It stays within $6\%$ of the conditionally valid oracle's width at every budget when $\gamma^{*}\geq1$ but pays $18$ to $23\%$ over the marginal rule when $\gamma^{*}=0$. Third, the conditionally valid scale is not the width-optimal scale. Under a marginal coverage constraint, the population width-optimal members of \autoref{eq:scale} are flatter than $\sigma$, at $\gamma=0.56$, $1.81$, and $5.35$ against $\gamma^{*}=1$, $4$, and $16$, and the tuner converges to these flatter members, with median selected $\gamma$ at the largest budget of $1/3$, $4/3$, and $4$ on the tuner's standardized disagreement, about $0.50$, $1.85$, and $5.83$ after undoing each replication's interquartile standardization. Its coverage conditional on $d=3$, roughly the $99.7$th percentile of the simulated disagreement, accordingly settles near $0.90$ against the oracle's uniform $0.95$, while the marginal rule falls to $0.56$ there at $\gamma^{*}=4$. Tuning the scale for efficiency therefore buys width at the cost of coverage in the tail, and when protection for high-disagreement examples is the goal, the Mondrian construction of \autoref{sec:mondrian} is the appropriate tool. The same comparison explains the crossover pattern, since tuning overtakes the better fixed choice earliest where that choice sits far from the width-optimal member, and at $\gamma^{*}=4$ the fixed $\gamma=1$ already nearly matches it. These conclusions are specific to a design with absolute residuals, disagreement drawn as $|\mathcal N(0,1)|$, and a true scale inside the family of \autoref{eq:scale}, where the population quantities are deterministic integrals and the finite-budget curves are averages over the $2{,}000$ replications. The fixed rule is moreover given the population standardizer, which a deployment must fix in advance. The trade-off transfers to CQR scores only qualitatively.

\subsection{Two Worked Test Listings}
\label{app:worked}

\autoref{fig:examples} grounds the mechanism of \autoref{fig:mechanism} in two individual SRED test listings from the seed-0 run of the same cross-dataset configuration. For the left listing the three source predictions nearly coincide, so the disagreement-scaled interval tightens relative to marginal CQR and still covers the true rent. For the right listing the sources conflict, so the disagreement-scaled interval widens instead. These are transparent post-hoc illustrations, not additional evaluation results. Both listings were selected using test outcomes, the left among cases covered by both methods where scaling narrowed the interval and the right among cases covered by disagreement scaling but missed by marginal CQR. This selection has no role in the aggregate comparisons. The intervals are computed on log-rent and mapped monotonically back to Swiss francs. The figure's width annotations compare the displayed CHF endpoints.

\begin{figure}[!htbp]
  \centering
  \includegraphics[width=0.98\linewidth]{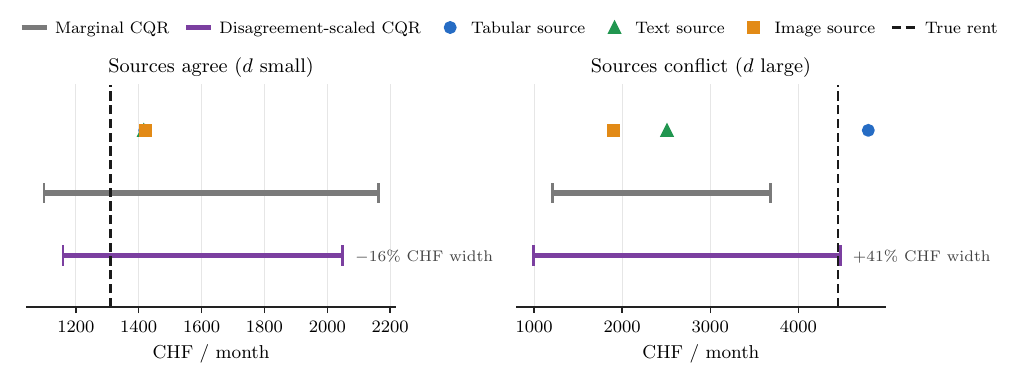}
  \caption{Two post-hoc illustrative SRED test listings from the seed-0 run under marginal and disagreement-scaled CQR. Color- and shape-coded markers show the three per-source predictions, the dashed line marks the true rent, and the bars are the calibrated intervals after mapping from log-rent to CHF. The annotations compare width in CHF. The examples were outcome-selected using the criteria stated in the text and are not aggregate performance evidence.}
  \label{fig:examples}
\end{figure}

\end{document}